\crefname{equation}{}{}
\crefname{theorem}{Thm.}{Thm.}
\crefname{figure}{Fig.}{Fig.}
\crefname{section}{Sec.}{Sec.}
\crefname{table}{Table}{Table}
\renewcommand{\epsilon}{\varepsilon}
\DeclareMathOperator*{\argmin}{arg\,min}
\DeclareRobustCommand{\bigO}{%
  \text{\usefont{OMS}{cmsy}{m}{n}O}%
}
\newcommand{\cT}{\mathcal{T}}
\newcommand{\cC}{\mathcal{C}}
\newcommand{\cP}{\mathcal{P}}
\newcommand{\cN}{\mathcal{N}}
\newcounter{prob}
\begin{document}
\mainmatter

\title{A Quantum Annealing-Based Approach to Extreme Clustering}
\titlerunning{A Quantum Annealing-Based Approach to Extreme Clustering}

\author{Tim Jaschek,\inst{1,2}  Marko Bucyk,\inst{1} and Jaspreet S. Oberoi\inst{1,3} }
\authorrunning{T. Jaschek, M. Bucyk, and J. S. Oberoi}
\tocauthor{T. Jaschek, M. Bucyk, and J. S. Oberoi}

\institute{1QB Information Technologies (1QBit), Vancouver, BC,  Canada 
\and
Dept. of Mathematics, University of British Columbia, Vancouver, BC, Canada
\and
School of Engineering Science, Simon Fraser University, Burnaby, BC, Canada
}

\maketitle              

\begin{abstract}
Clustering, or grouping, dataset elements based on similarity can be used not only to classify a dataset into a few categories, but also to approximate it by a relatively large number of representative elements. In the latter scenario, referred to as extreme clustering, datasets are enormous and the  number of representative clusters is large. We have devised a distributed method that can efficiently solve extreme clustering problems using quantum annealing. We prove that this method yields optimal clustering assignments under a separability assumption, and show that the generated clustering assignments are of comparable quality to those of assignments generated by common clustering algorithms, yet can be obtained a full order of magnitude faster.

\keywords{extreme clustering, distributed computing, quantum computing, maximum weighted independent set,  unsupervised learning
}
\end{abstract}

\let\thefootnote\relax\footnotetext{$^*$ T. Jaschek and J. S. Oberoi have contributed equally to this research. Address correspondence to: tim.jaschek@1qbit.com}

\section{Introduction}

Traditionally, clustering approaches have been developed and customized for tasks where the resultant number of clusters $k$ is not particularly high. In such cases, algorithms such as $k$-means++ \cite{arthur2007k}, BIRCH~\cite{birch}, DBSCAN \cite{ester1996density}, and spectral clustering produce high-quality solutions in a reasonably short amount of time. This is because these traditional algorithms scale  well with respect to the dataset cardinality $n$. However, in most cases, the computational complexity of these algorithms, in terms of the number of clusters, is either exponential or  higher-order polynomial. Another common issue is that some of the algorithms require vast amounts of memory.

The demand for clustering algorithms capable of solving problems with larger values of $k$ is continually increasing. Present-day examples involve deciphering the content of billions of web pages by grouping them into millions of labelled categories~\cite{nayak2014clustering, de2015parallel}, identifying similarities among billions of images using nearest-neighbour detection \cite{wang2013duplicate, liu2007clustering,woodley2018parallel}.  This domain of clustering, where $n$ and $k$ are both substantially large, is referred to as \textit{extreme clustering}~\cite{kobren2017hierarchical}. Although there is great value in perfecting this type of clustering, very little effort towards this end has been made by the machine learning community. Our algorithm is, in fact, such an effort. Its output is a \emph{clustering tree}, which can be used to generated multiple clustering assignments (or ``levels'') with varying degrees of accuracy (i.e., coarseness or fineness) of the approximation. Generating such a tree is not uncommon for clustering algorithms. Consider, for example, hierarchical clustering algorithms which generate binary clustering trees. Clustering trees are useful tools for dealing with real-world data visualization problems. Our algorithm, the \emph{Big Data Visualization Tool}, or \mbox{\textit{BiDViT}}, provides this functionality.

BiDViT employs a novel approach to clustering problems, which is based on the maximum weighted independent set (MWIS) problem in a graph induced by the original dataset and a parameter we call the \emph{radius of interest} or \emph{neighbourhood parameter}, which determines a relation of proximity. The use of such a parameter has been successfully employed in density-based spatial clustering of applications with noise (DBSCAN) \cite{ester1996density}. The MWIS problem can be transformed into a quadratic unconstrained binary optimization (QUBO) problem, the formulation accepted by a quantum annealer. An alternative way to address the underlying problem is to use a heuristic algorithm to approximate solutions to the MWIS problem. Quantum annealing and simulated annealing have been applied in centroid-based clustering \cite{kumar2018quantum, merendino2013simulated} and in density-based clustering \cite{kurihara2014quantum}. However, the approaches studied are not capable of addressing problems in the extreme clustering domain.

We prove that, under a separability assumption on the ground truth clustering assignment of the original dataset, our method identifies the ground truth labels when  parameters are selected that are within the bounds determined by that assumption. We provide runtime and solution quality values for both  versions of our algorithm, with respect to internal evaluation schemes such as the \mbox{Calinski--Harabasz} and the Davies--Bouldin scores. Our results suggest that \mbox{BiDViT} yields clustering assignments of a quality comparable to that of assignments generated by common clustering algorithms, yet does so a full order of magnitude faster.

\section{The Coarsening Method}
\label{sec:approximation_method}
Our algorithm is based on a combinatorial clustering method we call \mbox{\textit{coarsening}}. The key idea behind coarsening is to approximate a set \mbox{$X \subset \mathbb{R}^d$} by a subset $S \subseteq X$ such that, for any point \mbox{$x \in X$}, there exists a $y \in S$ such that \mbox{$\Vert x -y \Vert_2 < \epsilon$}, for some parameter $\epsilon > 0$. In this case, we say that $S$ is \mbox{$\epsilon$-\emph{dense}} in $X$ and call $\epsilon$ the \emph{radius of interest}. This concept is not restricted to subsets of Euclidean spaces and can be generalized to an arbitrary metric space $(M,d)$. For example, our coarsening method can be used for clustering assignments on finite subsets of Riemannian manifolds with respect to their geodesic distance, for instance, in clustering GPS data on the surface of the Earth when analyzing population density. In what follows, we assume that $X = \lbrace x^{(1)}, \ldots, x^{(n)} \rbrace$ is a dataset consisting of $n$ $d$-dimensional data points, equipped with a metric $d: X \times X \rightarrow [0,\infty)$. Finding an arbitrary \mbox{$\epsilon$-dense} subset of $X$ does not necessarily yield a helpful approximation. For example, $X$ itself is always $\epsilon$-dense in $X$. However, enforcing the additional constraint that any two points in the subset $S$ must be separated by a distance of at least $\epsilon$ yields more-interesting approximations, often leading to a reduction in the number of data points (one of our primary objectives). We call such a set \mbox{$\epsilon$-\emph{separated}}. \cref{fig:voronoi} shows a point cloud and an $\epsilon$-dense, $\epsilon$-separated subset. The theorem that follows shows that a maximal $\epsilon$-separated set $S$ of $X$ is necessarily $\epsilon$-dense in $X$. Let $B(x,r)$ denote the open metric ball with respect to $d$, with centre $x$ and radius $r$.

\begin{figure}[t]
  	\centering
  	\hfill
  	\includegraphics[clip, trim=2.4cm 1.5cm 1.8cm 1.3cm, width=0.4\textwidth]{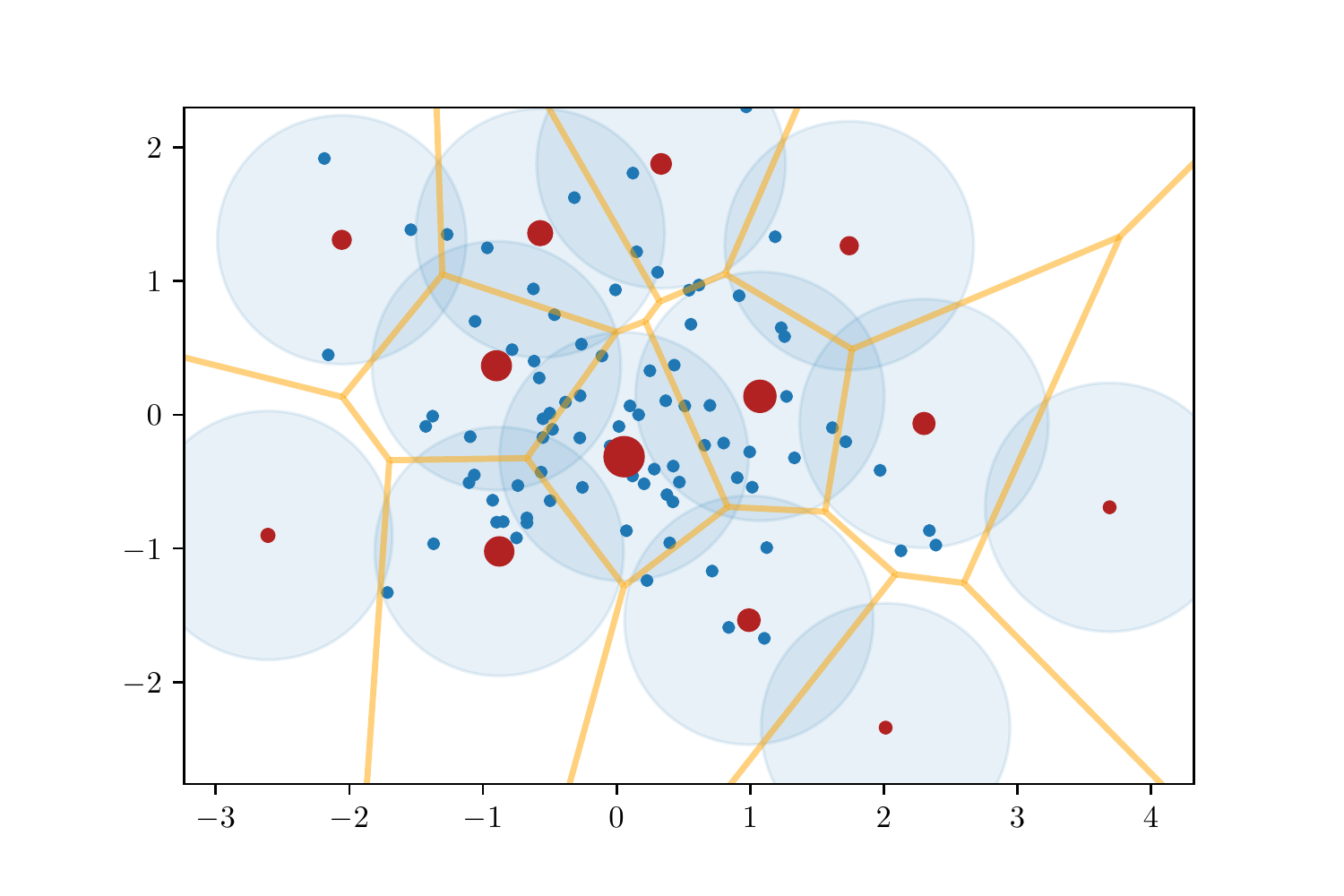}
  	\hfill
  	\includegraphics[clip, trim=3cm 3cm 3cm 2.5cm, width=0.5\textwidth]{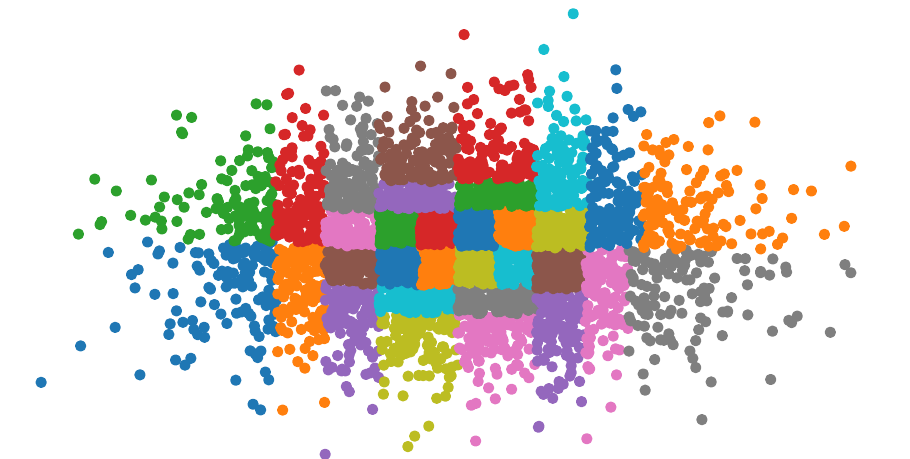}
  	\hfill
    \caption{Visualization of chunk collapsing (left) and data partitioning (right). Left) A maximal $\epsilon$-separated subset (red dots) of a dataset (red dots and blue dots). The circles have a radius equal to the radius of interest $\epsilon$. The weights of the red points are updated according to the number of blue points within a distance of $\epsilon$. The yellow borders are a Voronoi partition of the dataset indicating the clustering assignment. Right) Data partitioning of a dataset along the axes of maximum variance. In this example, there are $s=5$ partitioning steps, resulting in $2^5=32$ chunks.}
    \label{fig:voronoi}
    \label{fig:chunking}
\end{figure}

\begin{theorem}
\label{thm:seperated_implies_dense}
Let $S$ be a maximal $\epsilon$-separated subset of $X$ in the sense of set inclusion. Then the following properties must be satisfied.
\begin{itemize}
    \item[i)] We have the inclusion $X \subseteq \bigcup_{x \in S} B(x,\epsilon)$.
    \item[ii)] For every $y \in S$, it holds that $X \nsubseteq \bigcup_{x \in S\setminus \lbrace y \rbrace} B(x,\epsilon)$.
    \item[iii)] The sets $ B(x,\epsilon/2)$ for $x \in S$ are pairwise disjoint.
\end{itemize}
In particular, $S$ is a minimal $\epsilon$-dense subset of $X$.
\end{theorem}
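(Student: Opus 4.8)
The plan is to prove the three enumerated properties in order, deriving each directly from the two defining features of $S$: that it is $\epsilon$-separated (any two distinct points are at distance at least $\epsilon$) and that it is maximal with respect to set inclusion among such sets. The final ``in particular'' claim will then follow by combining (i) and (ii).

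For property (i), I would argue by contradiction. Suppose some $y \in X$ lies outside $\bigcup_{x \in S} B(x,\epsilon)$; then $d(y,x) \geq \epsilon$ for every $x \in S$, which means $y \notin S$ and that $S \cup \{y\}$ is still $\epsilon$-separated. This strictly enlarges $S$, contradicting maximality. Hence every point of $X$ lies in some $B(x,\epsilon)$, giving the inclusion. This is the step that converts the combinatorial maximality hypothesis into the geometric covering statement, and it is where the heart of the theorem lies.

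For property (ii), fix $y \in S$. Since $y \in X$ and $S$ is $\epsilon$-separated, we have $d(y,x) \geq \epsilon$ for all $x \in S \setminus \{y\}$, so $y \notin B(x,\epsilon)$ for any such $x$; thus $y$ is a point of $X$ not covered by $\bigcup_{x \in S \setminus \{y\}} B(x,\epsilon)$, establishing the non-inclusion. For property (iii), take distinct $x,x' \in S$ and suppose for contradiction that some $z \in B(x,\epsilon/2) \cap B(x',\epsilon/2)$. The triangle inequality then yields $d(x,x') \leq d(x,z) + d(z,x') < \epsilon/2 + \epsilon/2 = \epsilon$, contradicting $\epsilon$-separation; so the half-radius balls are pairwise disjoint.

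Finally, to conclude that $S$ is a \emph{minimal} $\epsilon$-dense subset, I would note that (i) already asserts $\epsilon$-density of $S$ in $X$ (every $x \in X$ is within $\epsilon$ of some point of $S$), while (ii) shows that removing any single point $y$ destroys the covering, so no proper subset of $S$ can be $\epsilon$-dense. I do not anticipate a serious obstacle here, as each part is a short triangle-inequality or maximality argument; the only point requiring mild care is ensuring that the definition of $\epsilon$-separated uses the closed inequality $d \geq \epsilon$ consistently with the open balls $B(\cdot,\epsilon)$ appearing in the covering statements, so that the contradiction in (i) is clean.
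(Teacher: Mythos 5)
Your proposal is correct and follows essentially the same route as the paper: property (i) via contradiction with maximality, (ii) from the $\epsilon$-separation of the fixed point itself, (iii) from the triangle inequality, and the minimal-density conclusion by combining (i) and (ii). The only difference is that you spell out (iii) and the open-ball/closed-inequality bookkeeping explicitly, which the paper leaves implicit.
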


\begin{proof}
Note that $i)$ is equivalent to $S$ being $\epsilon$-dense in $X$ and that, in combination with $ii)$, is equivalent to $S$ being a minimal with respect to this property. To prove $i)$, let $S$ be a maximal $\epsilon$-separated subset of $X$ and assume, in contradiction, that $S$ is not $\epsilon$-dense in $X$. Then we could find $x \in X$ such that $d(x,y) \geq \epsilon$, for every $y \in S$. Hence, $S \cup \lbrace x \rbrace$ would be $\epsilon$-separated, which is in contradiction to the maximality of $S$. To prove $ii$), we fix a point $x \in S$. Since $S$ is $\epsilon$-separated, $d(x,y) \geq \epsilon$ for any $y \in S$ and, thus, $S \setminus \lbrace x \rbrace$ is not $\epsilon$-dense in $X$. Property $iii$) follows from the triangle inequality.\qed
\end{proof}

Note that a maximal $\epsilon$-separated subset does not refer to an $\epsilon$-separated subset with fewer than or equally as many elements as all other $\epsilon$-separated subsets but, rather, to an $\epsilon$-separated subset that is no longer $\epsilon$-separated when a single data point is added. Contrary to~ \cref{thm:seperated_implies_dense}, a minimal \mbox{$\epsilon$-dense} subset does not need to be $\epsilon$-separated. Consider the set \mbox{$X = \lbrace 1,2,3,4 \rbrace \subset \mathbb{R}$}, and let $d$ be the Euclidean distance on $\mathbb{R}$. Then, $S=\lbrace 2,3 \rbrace$ is \mbox{$\sfrac{3}{2}$-dense} in $X$ but not \mbox{$\sfrac{3}{2}$-separated}. Also note that an $\epsilon$-separated subset is not necessarily an $\epsilon$-coreset, which is a weighted subset whose weighted $k$-means cost approximates the $k$-means cost of the original set with up to an accuracy of $\epsilon$ \cite{har2004coresets, balcan2013distributed}.

In the following, we assume that $X$ is equipped with a \emph{weight function} \mbox{$w: X \rightarrow \mathbb{R}_{+}$}. We call $w_i=w(x^{(i)})$ the \emph{weight} of $x^{(i)}$ and gather all weights in a \emph{weight vector} $w \in \mathbb{R}^n_+$. It will be clear from the context whether we refer to a weight function or a weight vector. The weight of a set $S \subseteq X$ is given by $\omega(S) = \sum_{x \in S} w(x)$. We have already argued that maximal $\epsilon$-separated subsets yield reasonable approximations. However, such subsets are not unique. We are thus  interested in finding an \emph{optimal} one, that is,  one that captures most of the weight of the original dataset. In other words, we are interested in solving the optimization problem
\begin{equation}
    \label{prob:original_problem}
    \tag{P\arabic{prob}}
    \addtocounter{prob}{1}
    \underset{S \subseteq X}{\text{maximize}}\,\,\,
    \omega(S) \quad
    \text{subject to} \quad
    S \text{ is $\epsilon$-separated}.
\end{equation}
If we impose unit weights, the solution set to this optimization problem will consist of the maximal $\epsilon$-separated subsets of $X$ with a maximum number of elements among all such subsets. The term ``maximal'' refers to set inclusion and the ``maximum'' refers to set cardinality. Since $w(x) >0$ for all $x \in X$, a solution $S^*$ to \cref{prob:original_problem} will always be a maximal $\epsilon$-separated subset and, therefore, by \cref{thm:seperated_implies_dense}, $\epsilon$-dense.
In \cref{sec:relation}, we show that this problem is equivalent to solving an MWIS problem for a weighted graph $G^\epsilon(X,E^\epsilon,w)$, depending solely on the dataset $X$, the Euclidean metric $d$, and the radius of interest $\epsilon$. Thus, the computational task of finding a maximal $\epsilon$-separated subset of maximum weight is NP-hard~\cite{lucas2014ising, karp1972reducibility}.

Every subset $U \subset X$ gives rise to a \emph{clustering assignment} \mbox{$\cC= \lbrace C_x \rbrace_{x \in U}$}. This assignment is given by 
\begin{equation}
    \label{eq:cluster_assignment}
    C_x = \lbrace y \in X : d(x,y) \leq d(x',y) \text{ for all } x' \in U \rbrace.
\end{equation}
Data points that are equidistant to multiple representative points are assigned to only one of them, uniformly at random. Typically, larger  values of $\epsilon$ result in smaller cardinalities of $\cC$. The following corollary summarizes properties of $\cC$ when $U$ is $\epsilon$-separated, and can be  readily verified.

\begin{corollary}
Let $\cC$ be the clustering assignment generated from a maximal \mbox{$\epsilon$-separated} set $S \subset X$. Then, the following properties are satisfied:
\begin{itemize}
    \item[i)] The clusters in $\cC$ are non-empty and pairwise disjoint.
    \item[ii)] The cluster diameter is uniformly bounded by $2\epsilon$, i.e., $\sup_{x \in S} \textup{diam}(C_x) \leq 2\epsilon$.
    \item[iii)] For all $x \in S$, it holds that $\max_{y \in C_x} d(x,y) < \epsilon$.
\end{itemize}
\end{corollary}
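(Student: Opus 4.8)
The plan is to prove the three properties in a logically convenient order, deferring i) and ii) until after iii), since the latter is the crux from which the others follow cheaply. The central ingredient is \cref{thm:seperated_implies_dense}: because $S$ is a maximal $\epsilon$-separated subset, part i) of that theorem guarantees that $S$ is $\epsilon$-dense, i.e., $X \subseteq \bigcup_{x \in S} B(x,\epsilon)$. I would also record at the outset that $X$, and hence each $C_x$, is finite, so that every maximum over a cluster is attained.

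To establish iii), I would fix $x \in S$ and take an arbitrary $y \in C_x$. By $\epsilon$-density there is some $z \in S$ with $y \in B(z,\epsilon)$, that is, $d(z,y) < \epsilon$. The defining condition \cref{eq:cluster_assignment} of $C_x$ asserts $d(x,y) \leq d(x',y)$ for \emph{every} $x' \in S$; applying it with the density witness $x' = z$ yields the chain $d(x,y) \leq d(z,y) < \epsilon$. Since $C_x$ is finite, $\max_{y \in C_x} d(x,y)$ is attained, and the above pointwise strict bound therefore upgrades to $\max_{y \in C_x} d(x,y) < \epsilon$, which is exactly iii).

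Property ii) then follows immediately from the triangle inequality and iii): for any $y, y' \in C_x$ we have $d(y,y') \leq d(y,x) + d(x,y') < 2\epsilon$, so $\textup{diam}(C_x) \leq 2\epsilon$ uniformly in $x \in S$. For i), non-emptiness is clear because $d(x,x) = 0 \leq d(x',x)$ for all $x' \in S$, whence $x \in C_x$; pairwise disjointness is precisely what the tie-breaking convention stated after \cref{eq:cluster_assignment} buys us — each $y \in X$ is assigned to exactly one representative, so the $C_x$ partition $X$ and in particular do not overlap.

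The only genuine subtlety — and hence the step I would treat most carefully — is the interplay between the non-strict inequality in the definition of $C_x$ and the strict inequality arising from the \emph{open} balls in the density statement. Naively, the cluster membership condition only gives $d(x,y) \leq \epsilon$; it is the routing of the bound through a density witness $z$, together with finiteness of $X$, that sharpens this to the strict bound $d(x,y) < \epsilon$ asserted in iii). Everything else is a direct application of the triangle inequality and the tie-breaking rule.
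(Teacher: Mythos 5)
Your proof is correct; the paper itself offers no argument for this corollary (it is stated as ``can be readily verified''), and your verification is exactly the intended one: route the bound in iii) through the $\epsilon$-density guaranteed by \cref{thm:seperated_implies_dense}, then get ii) by the triangle inequality and i) from $d(x,x)=0$ plus the tie-breaking convention. You correctly flag the one real subtlety --- upgrading $d(x,y)\leq\epsilon$ to the strict inequality via an open-ball density witness --- so nothing is missing.
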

Notice that these properties are not satisfied by \emph{every} clustering assignment, for example, the ones generated by $k$-means clustering. They are desirable in specific applications, such as image quantization, where a tight bound on the absolute approximation error is desired. However, they are undesirable if the ground truth clusters have diameters larger than $2\epsilon$. More details on the clustering assignment are provided in \cref{sec:algorithm}. 

One could argue that prior to identifying a maximum weighted independent set and using it to generate a clustering assignment, a dataset should be normalized. However, normalization is a transformation that would result in chunks not being defined by metric balls, but rather by ellipsoids. In particular, such a transformation would change the metric $d$. We assume that the metric $d$ already is the best indicator of proximity. 
In general, one can apply any homeomorphism $f$ to a dataset $X$, apply our clustering algorithm to the set $f(X)$, and obtain a clustering assignment by applying $f^{-1}$ to the individual clusters.

A common assumption in the clustering literature is \emph{separability}---not to be mistaken with \mbox{$\epsilon$-separability}---of the dataset with respect to a clustering $\mathcal{C}$. The dataset $X$ is called \emph{separable} with respect to a clustering $\mathcal{C}= \lbrace C_1, \ldots C_k \rbrace$ if
\begin{equation}
    \label{eq:sep}
    \max_{\substack{x,y \in C_i \\ 1 \leq i \leq k}} d(x,y) < \min_{\substack{x \in C_i, y \in C_j \\ 1 \leq i \neq j \leq k }} d(x,y),
\end{equation}
that is, if the maximum \emph{intra-cluster} distances are strictly smaller than the minimum \emph{inter-cluster} distances.
The following theorem shows that, if $\epsilon$ is chosen correctly, our coarsening method yields the clustering assignment $\mathcal{C}$. 

\begin{theorem}
    \label{thm:separability}
    Let $X$ be separable with respect to a clustering $\mathcal{C}= \lbrace C_1, \ldots C_k \rbrace$. Then, for any 
    \begin{equation}
            \label{eq:eps_range}
            \epsilon \in \left(\max_{\substack{x,y \in C_i \\ 1 \leq i \leq k}} d(x,y) , \min_{\substack{x \in C_i, y \in C_j \\ 1 \leq i \neq j \leq k }} d(x,y)\right],
    \end{equation}
    our coarsening methods yields the correct clustering assignment.
\end{theorem}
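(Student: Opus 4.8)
The plan is to prove that every maximal $\epsilon$-separated subset $S$ of $X$ picks out exactly one representative from each ground-truth cluster, and then that the Voronoi-type assignment induced by such an $S$ via \eqref{eq:cluster_assignment} reproduces $\mathcal{C}$. Because any solution of \cref{prob:original_problem} is in particular a maximal $\epsilon$-separated set, this argument covers the output of the coarsening method regardless of whether it returns a maximum-weight set or merely a maximal one. For brevity I would write $a=\max_{x,y\in C_i}d(x,y)$ and $b=\min_{x\in C_i,\,y\in C_j,\,i\neq j}d(x,y)$, so that the hypothesis becomes $a<b$ with $\epsilon\in(a,b]$.

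First I would obtain an upper bound on the number of representatives per cluster: if $x,y$ lie in a common $C_i$, then $d(x,y)\le a<\epsilon$, so they cannot both belong to an $\epsilon$-separated set, giving $|S\cap C_i|\le 1$. For the matching lower bound I would invoke \cref{thm:seperated_implies_dense}, by which maximality of $S$ forces $S$ to be $\epsilon$-dense; hence each $y\in C_i$ admits some $s\in S$ with $d(s,y)<\epsilon$. Were $s$ in a different cluster $C_j$, we would have $d(s,y)\ge b\ge\epsilon$, a contradiction, so $s\in C_i$ and $|S\cap C_i|\ge 1$. Together these show $S=\{s_1,\dots,s_k\}$ with a unique $s_i\in C_i$, so in particular $|S|=k$.

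It then remains to check that the induced partition equals $\mathcal{C}$. Fixing $y\in C_i$, the estimate $d(s_i,y)\le a<\epsilon$ together with $d(s_j,y)\ge b\ge\epsilon$ for every $j\neq i$ yields the strict inequality $d(s_i,y)<d(s_j,y)$, so \eqref{eq:cluster_assignment} assigns $y$ to $s_i$; since the $C_i$ partition $X$, this gives $C_{s_i}=C_i$ for every $i$, which is the claim (note the partition is recovered irrespective of which representative $s_i$ is chosen). The step I expect to require the most care is the boundary case $\epsilon=b$, where cross-cluster pairs may sit at distance exactly $\epsilon$: the density argument relies on the strict inequality $d(s,y)<\epsilon$ from the open-ball convention to rule out cross-cluster representatives, whereas the separation and assignment steps use the non-strict $d\ge\epsilon$. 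Verifying that all of these inequalities survive at $\epsilon=b$ is exactly what justifies including the right endpoint of the admissible interval.
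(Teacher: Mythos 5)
Your proposal is correct and follows essentially the same route as the paper's proof: intra-cluster distances lie below $\epsilon$ (so at most one representative per cluster), inter-cluster distances are at least $\epsilon$ (so at least one, by maximality), and the induced Voronoi assignment \eqref{eq:cluster_assignment} then recovers $\mathcal{C}$. You merely fill in details the paper leaves implicit---the lower bound $\lvert S \cap C_i \rvert \geq 1$ via the $\epsilon$-density of maximal $\epsilon$-separated sets from \cref{thm:seperated_implies_dense}, and the strict inequality $d(s_i,y) < \epsilon \leq d(s_j,y)$ that survives at the endpoint $\epsilon = b$---and you note, correctly, that the argument applies to any maximal $\epsilon$-separated set, not only to maximum-weight solutions of \cref{prob:original_problem}.
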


\begin{proof}
    To simplify our notation, we denote the lower and upper bounds of the interval in \eqref{eq:eps_range} by $l$ and $r$, respectively. By the separability assumption, this interval is non-empty. One can see that, for any admissible choice of $\epsilon$, any two points from different clusters are $\epsilon$-separated. Indeed, for $x \in C$ and $y \in C'$, it holds that
    $
        d(x,y) \geq r \geq \epsilon
    $.
    Furthermore, if a point $x$ in a cluster $C$ is selected, then no other point $y$ in the same cluster can be selected, as
    $
        d(x,y)  \leq l < \epsilon
    $.
    Therefore, every solution $S \subseteq X$ to  \cref{prob:original_problem} is a union of exactly one point from each cluster. Using the separability of $X$ with respect to $\mathcal{C}$, we can see that the clustering assignment induced by~\cref{eq:cluster_assignment} is coincident with $\mathcal{C}$. \qed
\end{proof}

In practice, the separability assumption is rarely satisfied, and it is challenging to select $\epsilon$ as above (as this assumes some knowledge about the clustering assignment). However, \cref{thm:separability} shows that our coarsening method is of research value, and can potentially yield optimal clustering assignments.

We have developed two methods, which we refer to as the \emph{heuristic method} and the \emph{quantum method}, to address the NP-hard task of solving \cref{prob:original_problem}. The heuristic method loosens the condition of having a maximum weight; it can be seen as a greedy approach to \cref{prob:original_problem}. In contrast, the quantum method explores all different maximal \mbox{$\epsilon$-separated} subsets simultaneously,  yielding one that has maximum weight. The quantum method is based on the formulation of a QUBO problem, which can be solved efficiently using a quantum annealer like the \mbox{D-Wave 2000Q} \cite{DWave} or a digital annealer such as the one developed by \mbox{Fujitsu~\cite{Fujitsu}}. 

\section{The Algorithm}
\label{sec:algorithm}

Let \mbox{$X = \lbrace x^{(1)},\ldots,x^{(n)} \rbrace \subset \mathbb{R}^d$} denote a dataset of $n$ $d$-dimensional data points. Note that, mathematically speaking, a dataset is not a set but rather a \emph{multiset}, that is, repetitions are allowed.  Our algorithm consists of two parts:  \emph{data partitioning} and  \emph{data coarsening}, the latter of which can be further subdivided into \emph{chunk coarsening} and \emph{chunk collapsing}.

\subsection{Data Partitioning}
\label{sec:data_partitioning}
In general, the computational complexity of distance-based clustering methods is proportional to the square of the dataset cardinality, as all pairwise distances must be computed. This bottleneck can be overcome by dividing the dataset and employing distributed approaches \cite{balcan2013distributed, malkomes2015fast, har2004coresets}, yielding a result different from the one we would obtain when applying clustering methods on the entire dataset. However, its slight imprecision results in a significant computational speed-up. 

A \emph{partition} $\mathcal{P}$ of $X$ is a collection of non-empty disjoint sets \mbox{$P_1, \ldots, P_k \subset X$} such that $X = \bigcup_{P \in \mathcal{P}} P$. Elements of partitions are typically referred to as blocks, parts, or cells; however, we refer to them as \emph{chunks}. The partitioning is intended to be \textit{homogeneous}: every extracted chunk has an equal number of data points (there might be minor differences when the cardinality of the chunk to be divided is odd). The upper bound on the number of points desired in a chunk is referred to as the \textit{maximum chunk cardinality} $\kappa$. To determine $\kappa$, one should take into account the number of available processors, their data handling capacity, or, in the case of a quantum annealer, the number of fully connected qubits.

To break the data into chunks, we employ a modified version of the well-known ``median cut" algorithm, which is frequently used in colour quantization. First, we select an axis of maximum variance. We then bisect the dataset along the selected axis, say $\ell$, at the median $m$ of $\lbrace x^{(1)}_\ell, \ldots, x^{(n)}_\ell \rbrace$ in such a way as to  obtain two \emph{data chunks} $P_1$ and $P_2$ whose cardinalities differ by at most one (in the case where $n$ is odd) and which satisfy
\mbox{$P_1 \subseteq \lbrace x \in X  :  x_\ell \leq m \rbrace$} 
and
\mbox{$P_2 \subseteq \lbrace x \in X  :  x_\ell \geq m \rbrace$}.
We cannot simply assign \mbox{$P_1 = \lbrace x \in X  :  x_\ell \leq m \rbrace$} and $P_2 = X \setminus P_1$, as these sets might differ drastically in cardinality. For example, when $x_\ell^{(1)} = \ldots = x_\ell^{(n)}$, this assignment would imply that $P_1 = X$ and $P_2 = \emptyset$. 

By using $P_1$ and $P_2$ in the role of $X$, this process can be repeated iteratively, until the number of data points in the chunk to be divided is less than or equal to the maximum chunk cardinality $\kappa$, yielding a binary tree of data chunks. After $s$ iterations, this leaves us with $2^s$ chunks $P^{(s)}_k$ such that
$
    X = \bigcup_{1 \leq k \leq 2^s} P^{(s)}_k,
$
where the union is disjoint. \cref{fig:chunking} provides a visualization.

\subsection{Chunk Coarsening}
\label{sec:chunk_approx}

The goal of a data coarsening step is, for each chunk, to find \textit{representative} data points such that their union can replace the original point cloud, while maintaining the original data distribution as accurately as possible. 

Let $P = \lbrace x^{(1)}, \ldots, x^{(n)} \rbrace$ be a chunk and $\epsilon>0$ be the radius of interest. In what follows, we assume that all the data points are pairwise different. Practically, this can be achieved by removing duplicates and cumulatively incrementing the weight of the  representative point we wish to keep by the weight of the discarded duplicates. The radius of interest~$\epsilon$ induces a weighted graph $G^\epsilon=(P, E^\epsilon, w_P)$, where $P$ is the vertex set, the edge set $E^\epsilon$ is given by the relation~$\sim_\epsilon$ defined by $x \sim_\epsilon y$ if and only if $d(x,y) < \epsilon$ for all $x,y \in P$, and the weight function $w_P: P \rightarrow \mathbb{R}_+$ is the restriction of $w$ to $P$. For each data point $x^{(i)}$, we denote its weight $w_P(x^{(i)})$ by $w_i$.

For each data point $x^{(i)}$, we introduce a binary decision variable $s_i$ that encodes whether $x^{(i)}$ is used in a possible set $S^*$. Furthermore, we define the  \emph{neighbourhood matrix} $N^{(\epsilon)}$ (or similarity matrix) of the graph $G^\epsilon=(P,E^\epsilon, w_P)$ by  $N_{ij}^{(\epsilon)} = 1$ if  \mbox{$x^{(i)} \sim_\epsilon x^{(j)}$}, and $N_{ij}^{(\epsilon)} = 0$ otherwise. Problem \cref{prob:original_problem} can then be posed as a quadratically constrained quadratic program (QCQP) given by
\begin{equation}
    \tag{P\arabic{prob}}
    \addtocounter{prob}{1}
    \label{prob:main_problem}
    \underset{s \in \lbrace 0,1 \rbrace^n}{\text{maximize}}\,\,
    \sum_{i=1}^n s_i w_i \quad
    \text{subject to} \quad
    \sum_{i=1}^n \sum_{j > i} s_i N_{ij}^{(\epsilon)} s_j = 0.
\end{equation}
Here, the inner summation of the constraint does not need to run over all indices, due to the symmetry of $N^{(\epsilon)}$. The matrix form of \cref{prob:main_problem} is given by maximizing
$ s^T w $ subject to the constraint $s^T \overline{N}^{(\epsilon)} s = 0$, where $\overline{N}^{(\epsilon)}$ is the upper triangular matrix of $N^{(\epsilon)}$ having all zeroes along the diagonal. As explained in \cref{sec:relation},  \cref{prob:main_problem} is equivalent to the NP-hard MWIS problem for $G^\epsilon=(P,E^\epsilon, w_P)$, and thus is computationally intractable for large problem sizes. Note that \cref{prob:main_problem} can be written as the \mbox{0--1 integer linear program (ILP)}
\begin{equation}
    \tag{P\arabic{prob}}
    \addtocounter{prob}{1}
    \label{prob:main_problem_linearization}
    \underset{s \in \lbrace 0,1 \rbrace^n}{\text{maximize}}\,\,
    \sum_{i=1}^n s_i w_i \quad
    \text{subject to} \quad
    s_i + s_j \leq 1, \quad \text{for } i,j \text{ such that } \overline{N}_{ij}^{(\epsilon)} = 1.
\end{equation}

\noindent
We present two methods we have devised to address   \cref{prob:main_problem}.

\subsubsection{The Heuristic Method}
We wish to emphasize that the heuristic method does not provide us with a solution to \cref{prob:main_problem}. Rather, the aim of this method is to obtain an \mbox{$\epsilon$-separated} subset $S$ with a high---but not necessarily the maximum---weight~$\omega(S)$. The seeking of approximate solutions to the MWIS problem is a well-studied subject \cite{balaji2009approximating, hifi1997genetic, kako2009approximation}. Typically, researchers employ greedy algorithms, LP-based algorithms (using the relaxation of \cref{prob:main_problem_linearization}), or semi-definite programming (SDP) algorithms; see \cite{kako2009approximation} for an analysis. 

We employ a classic greedy algorithm due to its simplicity and low computational complexity. In each step, we add the data point that \emph{locally} is the best choice in the sense that the ratio of the weight of its neighbourhood to its own weight is as small as possible. Prior to the execution of the step, we remove the point and its neighbours from the set of candidates. Pseudocode of the greedy algorithm is provided in \cref{alg:heuristic}. Before we state a theoretical result on the approximation ratio of this algorithm, we define the \emph{weighted degree} $\deg_w(v)$ of a vertex $v$ in a weighted graph $G=(V,E,w)$ and the \emph{weighted average degree} of $G$ as
$
    \deg_w(v) = \omega(N_v) / w(v)
$ 
and 
$
    \overline{\deg_w}(G) = \sum_{v \in V}w(v)\deg_w(v) / \omega(V),
$
respectively, where $N_v = \lbrace u \in V : u \sim v \rbrace$ is the neighbourhood of vertex $v$ \cite{kako2009approximation}.

\RestyleAlgo{ruled}
\begin{algorithm}[t]
\DontPrintSemicolon
{\scriptsize
\SetKwInOut{Input}{input}\SetKwInOut{Output}{output}
\Input{data chunk $P$; weight function $w$; neighbourhood matrix $N^{(\epsilon)}$}
\Output{$\epsilon$-separated subset with high (but not necessarily maximum) weight $S^*$}
$S^* \leftarrow \emptyset$\;
\While{$P \neq \emptyset$}{
select $x \in \argmin_{v \in P} \deg_w(v) $ uniformly at random\; 
use $N^{(\epsilon)}$ to determine $N_x$\;
remove $x$ and its neighbours $N_x$ from $P$ \; 
$S^* \leftarrow S^* \cup \lbrace x \rbrace$\;
}
\Return{$S^*$
}
}
\caption{\, Greedy$(P,w,N^{(\epsilon)})$}
\label{alg:heuristic}
\end{algorithm}

\begin{theorem}
    \label{thm:approximation_ration}
    \Cref{alg:heuristic} has an approximation ratio of $\overline{\deg_w}(G) + 1$, i.e., 
    \begin{equation}
    \label{eq:approximation_ratio}
        \omega(S) \leq \left(\overline{\deg_w}(G) + 1 \right)^{-1} \omega(S^*),
    \end{equation}
    for any solution $S^*$ to \cref{prob:original_problem} and any output $S$ of the algorithm. Moreover, the bound in \eqref{eq:approximation_ratio} is tight.
\end{theorem}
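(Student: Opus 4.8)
\section*{Proof proposal}

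The plan is to establish the approximation guarantee in its standard form $\omega(S^*) \le (\overline{\deg_w}(G)+1)\,\omega(S)$, equivalently $\omega(S) \ge \omega(S^*)/(\overline{\deg_w}(G)+1)$. I would reach it through three ingredients: a charging argument that lower-bounds $\omega(S)$ vertex by vertex (a Caro--Wei/Tur\'an-type inequality), an application of Jensen's inequality to pass to the weighted average degree, and the trivial bound $\omega(S^*) \le \omega(V)$ arising from $S^* \subseteq X$.

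First I would fix notation for a run of \cref{alg:heuristic}. Let $v_1, \ldots, v_m$ be the vertices selected, in order, and let $V_t$ be the vertex set still present immediately before $v_t$ is chosen, so that $V_1 = V$. At step $t$ the algorithm deletes the closed neighbourhood $D_t := \lbrace v_t \rbrace \cup (N_{v_t} \cap V_t)$, and the sets $D_1, \ldots, D_m$ partition $V$. The selection rule $v_t \in \argmin_{v \in V_t}\deg_w(v)$ is equivalent to choosing $v_t$ to maximize $w(v)/\omega(\lbrace v\rbrace \cup (N_v\cap V_t))$. The crux is the charging inequality: for every $u \in D_t$,
$$\frac{w(v_t)}{\omega(D_t)} \ge \frac{w(u)}{\omega(\lbrace u\rbrace \cup (N_u\cap V_t))} \ge \frac{w(u)}{\omega(\lbrace u\rbrace \cup N_u)},$$
where the first inequality is optimality of the greedy choice on $V_t$ and the second uses $N_u \cap V_t \subseteq N_u$, since deletions only shrink neighbourhoods. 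Multiplying through by $w(u)$ and summing over $u \in D_t$ collapses the left-hand side to $w(v_t)$, giving $w(v_t) \ge \sum_{u \in D_t} w(u)^2/\omega(\lbrace u\rbrace \cup N_u)$. Summing over $t$, using that the $D_t$ partition $V$ together with $\omega(\lbrace u\rbrace\cup N_u) = w(u)\,(1+\deg_w(u))$, yields
$$\omega(S) = \sum_{t} w(v_t) \ge \sum_{u \in V} \frac{w(u)}{1+\deg_w(u)}.$$

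The remaining analytic steps are routine. Viewing $w(v)/\omega(V)$ as a probability distribution on $V$, the expectation of $\deg_w$ under it is exactly $\overline{\deg_w}(G)$; since $x \mapsto (1+x)^{-1}$ is convex on $[0,\infty)$, Jensen's inequality gives $\sum_{u} \frac{w(u)}{1+\deg_w(u)} \ge \omega(V)/(1+\overline{\deg_w}(G))$. Combining this with the previous display and with $\omega(S^*)\le \omega(V)$ produces the claimed ratio.

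For tightness I would exhibit a family of weighted stars: a centre $c$ of weight tending to $\sqrt{r}$ from above, joined to $r$ leaves of unit weight (the slight excess in the centre weight breaks the tie in the selection rule). Here $\deg_w(c)=\deg_w(\text{leaf})=\sqrt r$ in the limit, so $\overline{\deg_w}(G)\to\sqrt r$; the greedy rule forces selection of the centre, yielding $\omega(S)\to\sqrt r$, whereas the optimal independent set is the leaf set with $\omega(S^*)=r$. Thus $\omega(S^*)/\omega(S)\to\sqrt r=\overline{\deg_w}(G)$, so the ratio approaches $\overline{\deg_w}(G)+1$ as $r\to\infty$, certifying that no smaller constant works. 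I expect the charging inequality to be the main obstacle: one must track the two nested monotonicities (greedy optimality on $V_t$, then the passage from $V_t$-neighbourhoods to full-graph neighbourhoods) and verify that the telescoping over the partition $\lbrace D_t\rbrace$ is exact. The convexity step and the tightness construction are comparatively mechanical, the only subtlety being that, because the reduction to $\omega(V)$ loses a factor, tightness against $\omega(S^*)$ is attained asymptotically rather than exactly.
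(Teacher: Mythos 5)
Your proof is correct, but be aware that the paper does not actually prove this theorem: its ``proof'' is a one-line citation to Kako et al.\ [Thm.~6], and what you have written out is, in essence, the argument behind that cited result (the standard analysis of the GWMIN greedy rule). The charging inequality $w(v_t) \ge \sum_{u\in D_t} w(u)^2/\omega(\{u\}\cup N_u)$ summed over the partition $\{D_t\}$, the identity $\omega(\{u\}\cup N_u)=w(u)\,(1+\deg_w(u))$, the Jensen step under the weight-proportional distribution, the final relaxation $\omega(S^*)\le\omega(V)$, and the weighted-star family as a witness of asymptotic tightness are all sound. Your self-contained write-up also surfaces two points the paper glosses over. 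First, the inequality in the theorem as printed is reversed: as stated, $\omega(S)\le\bigl(\overline{\deg_w}(G)+1\bigr)^{-1}\omega(S^*)$ fails already for a single edge with weights $2$ and $1$, where greedy returns the optimum while $\overline{\deg_w}(G)+1=2$; the guarantee you prove, $\omega(S)\ge\bigl(\overline{\deg_w}(G)+1\bigr)^{-1}\omega(S^*)$, is the correct reading. Second, your first charging inequality invokes greedy optimality of $w(v)/\omega(\{v\}\cup(N_v\cap V_t))$ over the \emph{current} vertex set, which presumes $\deg_w$ is recomputed on the shrinking chunk; \cref{alg:heuristic} is ambiguous on this, but the bound also goes through with static degrees by first relaxing $\omega(D_t)\le\omega(\{v_t\}\cup N_{v_t})$ and then comparing $\deg_w(v_t)\le\deg_w(u)$ in $G$, so nothing is lost either way.
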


\begin{proof}
    A proof is given in \cite[Thm. 6]{kako2009approximation}. \qed
\end{proof}

\subsubsection{The Quantum Method}
In contrast to the heuristic method, the QUBO approach provides an actual (i.e., non-approximate) solution to \cref{prob:main_problem}. We reformulate the problem by transforming the QCQP into a QUBO problem.

Using the Lagrangian penalty method, we incorporate the constraint into the objective function by adding a penalty term. For a sufficiently large penalty multiplier $\lambda>0$, the solution set of \cref{prob:main_problem} is equivalent to that of
\begin{equation}
    \tag{P\arabic{prob}}
    \addtocounter{prob}{1}
    \label{prob:main_problem_QUBO}
    \underset{s \in \lbrace 0,1 \rbrace^n}{\text{maximize}}\,\,
    \sum_{i=1}^n s_i w_i - \lambda\sum_{i=1}^n \sum_{j > i} s_i N_{ij}^{(\epsilon)} s_j.
\end{equation}

One can show that, for $\lambda > \max_{i=1, \ldots n}, w_i$ every solution to \cref{prob:main_problem_QUBO} satisfies the separation constraint \cite[Thm. 1]{abbott2018hybrid}. Instead, we use individual penalty terms $\lambda_{ij}$, as this may lead to a QUBO problem with much smaller coefficients, which results in improved performance when solving the problem using a quantum annealer. Expressing \cref{prob:main_problem_QUBO} as a minimization, instead of a maximization, problem and using matrix notation yields the problem
\begin{equation}
    \tag{P\arabic{prob}}
    \addtocounter{prob}{1}
    \label{prob:main_problem_QUBO_matrix}
    \underset{s \in \lbrace 0,1 \rbrace^n}{\text{minimize}}\,\,
    s^T Q s,
\end{equation}
where $Q_{ij}=-w_i$ if $i=j$, $Q_{ij}=\lambda_{ij}$ if $N_{ij}^{(\epsilon)}=1$ and $i<j$, and $Q_{ij}=0$ otherwise. 
Solutions to \cref{prob:main_problem_QUBO_matrix} can be approximated using heuristics such as simulated annealing \cite{nolte2000note}, path relinking~\cite{lu2010hybrid}, tabu search \cite{lu2010hybrid}, and parallel tempering \cite{zhu2016borealis}. Before solving \cref{prob:main_problem_QUBO_matrix}, it is advisable to reduce its size and difficulty by making use of logical implications among the coefficients \cite{glover2018logical}. This involves fixing every variable that corresponds to a node that has no neighbours to one, as it  necessarily is included in an $\epsilon$-dense subset. 

The following theorem show that \cref{prob:main_problem} is equivalent to \cref{prob:main_problem_QUBO_matrix} for a suitable choice of $\lambda_{ij}$, for $1\leq i<j \leq n$.

\begin{theorem}
    \label{thm:multiplier}
    Let $\lambda_{ij}  > \max \lbrace w_i, w_j \rbrace$ for all $1 \leq i < j \leq n$. Then, for any solution $s \in \lbrace 0, 1 \rbrace^n$ to \cref{prob:main_problem_QUBO_matrix}, the corresponding set $S \subseteq X$ is \mbox{$\epsilon$-separated}. In particular, the solution sets of \cref{prob:main_problem} and \cref{prob:main_problem_QUBO_matrix} coincide.
\end{theorem}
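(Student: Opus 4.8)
The plan is to read off the combinatorial meaning of the quadratic form and then use a one-variable exchange argument to forbid any conflicting pair in an optimal assignment. First I would expand $s^{T}Qs$. Because $s_{i}\in\{0,1\}$ gives $s_{i}^{2}=s_{i}$, the diagonal of $Q$ produces the linear term $-\sum_{i}w_{i}s_{i}$ and the strictly-upper-triangular part produces the penalty, so that
\begin{equation*}
    s^{T}Qs \;=\; -\sum_{i=1}^{n}w_{i}s_{i} \;+\; \sum_{i<j}\lambda_{ij}\,N_{ij}^{(\epsilon)}\,s_{i}s_{j} \;=\; -\,\omega(S) \;+\; \sum_{i<j}\lambda_{ij}\,N_{ij}^{(\epsilon)}\,s_{i}s_{j},
\end{equation*}
where $S=\{x^{(i)}:s_{i}=1\}$. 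Every penalty coefficient is positive, so the double sum is non-negative and vanishes exactly when no two selected indices are $\epsilon$-adjacent, i.e.\ exactly when $S$ is $\epsilon$-separated, which is precisely the feasibility constraint of \cref{prob:main_problem}.

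The heart of the proof is to show that a minimizer can never keep a conflicting pair. Suppose, for contradiction, that $s$ minimizes $s^{T}Qs$ but the associated $S$ is not $\epsilon$-separated. Then there exist $i<j$ with $s_{i}=s_{j}=1$ and $N_{ij}^{(\epsilon)}=1$. Let $s'$ coincide with $s$ except that $s'_{j}=0$. Writing $\lambda_{jk}$ for the penalty coefficient of the edge $\{j,k\}$ (with its indices reordered increasingly), flipping this single bit changes the objective by
\begin{equation*}
    (s')^{T}Qs' - s^{T}Qs \;=\; w_{j} \;-\; \sum_{\substack{k\,:\,s_{k}=1 \\ N_{jk}^{(\epsilon)}=1}}\lambda_{jk}.
\end{equation*}
The index set of this sum is non-empty, as it contains $i$, and by hypothesis $\lambda_{ij}>\max\{w_{i},w_{j}\}\geq w_{j}$, while all remaining summands are positive; hence the sum strictly exceeds $w_{j}$ and the change is strictly negative. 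This contradicts the minimality of $s$, so every solution of \cref{prob:main_problem_QUBO_matrix} is $\epsilon$-separated.

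This exchange step is where the care lies, and it pinpoints the role of the threshold: $\lambda_{ij}>\max\{w_{i},w_{j}\}$ is exactly what guarantees that the weight forfeited by deleting \emph{either} endpoint of a violated edge is always overcompensated by the penalty thereby removed, so that no conflicted assignment can be optimal. Once feasibility of all minimizers is established, the equivalence of the solution sets follows by bookkeeping. Restricted to $\epsilon$-separated assignments the penalty vanishes and the objective equals $-\omega(S)$, so minimizing $s^{T}Qs$ over such assignments is identical to maximizing $\omega(S)$, i.e.\ to \cref{prob:main_problem}. Since the first part forces every global minimizer to be $\epsilon$-separated, the global minimum is attained on a separated assignment and equals $-\max\{\omega(S):S\text{ is }\epsilon\text{-separated}\}$. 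Consequently $s$ minimizes $s^{T}Qs$ if and only if $S$ is an $\epsilon$-separated set of maximum weight, that is, a solution of \cref{prob:main_problem}, which gives the claimed coincidence of the two solution sets.
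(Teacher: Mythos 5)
Your proof is correct and follows essentially the same route as the paper: a single-bit-flip (exchange) argument showing that any minimizer containing a conflicting pair could be strictly improved, using $\lambda_{ij}>\max\{w_i,w_j\}$ to ensure the removed penalty outweighs the forfeited weight, followed by the observation that the two objectives agree on $\epsilon$-separated assignments, which forces the solution sets to coincide. The paper packages the last step as a chain of inequalities between the two objective functions evaluated at the respective optima, but the substance is identical to your restriction argument.
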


\begin{proof}
    We generalize the proof of  \cite[Thm. 1]{abbott2018hybrid} and show that every solution $s$ to \cref{prob:main_problem_QUBO_matrix} satisfies the separation constraint
    $
        \sum_{i=1}^n \sum_{j>i} s_i N_{ij}^{(\epsilon)} s_j = 0
    $.
    Assuming, in contradiction, that the opposite were to be the case, we could find a solution $s$ and indices $k$ and $\ell$ such that $1 \leq k < \ell \leq n$ and $s_k = s_\ell = N_{k\ell}^{(\epsilon)} =1$. Let $e_k$ denote the $k$-th standard unit vector, and let $v = s - e_k$. Then,
    \begin{align}
         v^T Q v    &= s^T Q s  - \sum_{j>k}^n s_j Q_{kj} - \sum_{i<k}^n s_i Q_{ik} -  Q_{kk} \\
                    &= s^T Q s  - \sum_{i\neq k} s_i \lambda_{\sigma(i,k)} N_{ik}^{(\epsilon)} + w_k,
    \end{align}
    where $\sigma: \mathbb{N}^2 \rightarrow \mathbb{N}^2$, defined by $\sigma(i,k)=(\min(i,k),\max(i,k))$, orders the index accordingly. This technicality is necessary, as we defined $\lambda_{ij}$ only for \mbox{$1 \leq i <j \leq n$}. As \mbox{$N_{k\ell}^{(\epsilon)} = s_\ell = 1$}, we have $\sum_{i\neq k}s_i \lambda_{\sigma(i,k)} N_{ik}^{(\epsilon)} \geq \lambda_{\sigma(\ell,k)} $, and thus
    \begin{equation}
        v^T Q v \leq s^T Q s - \lambda_{k \ell} + w_k.
    \end{equation}
    Therefore, as $\lambda_{k \ell} > \max \lbrace{ w_k, w_\ell \rbrace} \geq w_k$, it holds that $v^T Q v < s^T Q s$, which is absurd, as, by assumption, $s$ is a solution to~\cref{prob:main_problem_QUBO_matrix}.

    We now show that the solution sets of \cref{prob:main_problem} and \cref{prob:main_problem_QUBO_matrix} coincide. Note that \cref{prob:main_problem} is equivalent to the optimization problem
    \begin{equation}
    \tag{P\arabic{prob}}
    \addtocounter{prob}{1}
    \label{eq:same_form}
    \underset{s \in \lbrace 0,1 \rbrace^n}{\text{minimize}}\,
,\    - s^T w \quad
    \text{subject to} \quad
    s^T \overline{N}^{(\epsilon)} s = 0.
\end{equation}
   Let $s_1$ and $s_2$ be solutions to \cref{eq:same_form} and \cref{prob:main_problem_QUBO_matrix}, respectively. We denote the objective functions by $p_1(s) = -s^T w$ and  $p_2(s) = -s^T w + s^T \left(\Lambda \circ \overline{N}^{(\epsilon)}\right) s$,
   where $\Lambda$ is the matrix defined by $\Lambda_{ij} = \lambda_{ij}$ for $1 \leq i < j \leq n$, and zero otherwise, and the term $\Lambda \circ \overline{N}^{(\epsilon)} \in \mathbb{R}^{n \times n}$ denotes the Hadamard product of the matrices $\Lambda$ and $\overline{N}^{(\epsilon)}$, given by
   element-wise multiplication.
   Then, as $\lambda_{ij} > \max \lbrace w_i,w_j \rbrace$ for $1 \leq i < j \leq n$, by the observation above, both $s_1$ and $s_2$ satisfy the separation constraint. Since $s$ and $\overline{N}^{\epsilon}$ are coordinate-wise non-negative and $\lambda_{ij}> \min_{k=1,\ldots,n} w_k > 0$ for $1 \leq i < j \leq n$, it holds that
   \begin{equation}
       s^T \overline{N}^{(\epsilon)}s = 0 \quad \Leftrightarrow \quad 
       s^T \left(\Lambda \circ \overline{N}^{(\epsilon)}\right) s=0,
   \end{equation}
   thus, if $s$ satisfies the separation constraint, then $p_2(s) = p_1(s)$. Using this observation, and that $s_1$ and $s_2$ minimize $p_1$ and $p_2$, respectively, we have 
   \begin{equation}
   \label{eq:chain}
       p_1(s_1) \leq p_1(s_2) = p_2(s_2) \leq p_2(s_1) = p_1(s_1).
   \end{equation}
   Hence, the inequalities in \cref{eq:chain} must actually be equalities; thus, the solution sets of the optimization problems coincide. \qed
\end{proof}

Problem \cref{prob:main_problem_QUBO_matrix} can be transformed to an Ising spin model by mapping $s$ to $2s-1$. This form is desirable because the ground state of the Hamiltonian of an Ising spin model can be determined efficiently with a quantum annealer. 

\subsection{Chunk Collapsing}
Having identified a maximal $\epsilon$-separated subset $S \subseteq P$, we \emph{collapse} the vertices $P \setminus S$ into $S$, meaning we update the weight of each $x \in S$ according to the weights of all $y \in P \setminus S$ that satisfy $x \sim_\epsilon y$. We aim to assign each $y \in P \setminus S$ to a \emph{unique} $x \in S$ by generating a so-called Voronoi decomposition (depicted in \cref{fig:voronoi}) of each chunk $P$, which is a partition, where each point $x \in P$ is assigned to the closest point within a subset $S$. More precisely, we define the sets $C_x$ as in \cref{eq:cluster_assignment}, for each $x \in S$. By construction, $C_x$ contains all vertices that  will be collapsed into $x$, in particular, $x$ itself. We then assign the coarsened chunk $S$ a new weight function $w_S$ defined by
\begin{equation}
    w_S(x) = \omega(C_x) = \sum_{y \in C_x} w(y).
\end{equation}
In practice, to prevent having very large values for the individual weights, one might wish to add a linear or  logarithmic scaling to this weight assignment. In our experiments, we did not add such a scaling.

\subsection{Iterative Implementation of BiDViT}
BiDViT repeats the procedure of data partitioning, chunk coarsening, and chunk collapsing with an increasing radius of interest, until the entire dataset collapses to a single data point. We call these iterations BiDViT~\textit{levels}. The increase of  $\epsilon$  between BiDViT levels is realized by multiplying $\epsilon$ by a constant factor, denoted by $\alpha$ and specified by the user. In our implementation we have introduced a \verb|node| class that has three attributes: \verb|coordinates|, \verb|weight|, and \verb|parents|. We initialize BiDViT by creating a \verb|node_list| containing the nodes corresponding to the weighted dataset (if no weights are provided then the weights are assumed to be the multiplicity of the data points). After each iteration, we remove the nodes that collapsed into representative nodes from the \verb|node_list| and keep only the remaining representative nodes. However, we append the removed nodes to the parents of the representative node. The final \verb|node_list| is a data tree, that is, it consists of only one node, and we can move upwards in the hierarchy by accessing its parents (and their parents and so on); see \cref{fig:tree}. Two leaves of the data tree share a label with respect to a specific BiDViT level, say $m$, if they have collapsed into centroids which, possibly after multiple iterations, have collapsed into the same centroid at the $m$-th level of the tree. For the sake of reproducibility, we provide pseudocode (see~\cref{alg:bidvit}).

\RestyleAlgo{ruled}
\begin{algorithm}[t]
\DontPrintSemicolon
{\scriptsize
\SetKwInOut{Input}{input}\SetKwInOut{Output}{output}
\Input{data set $X$; initial radius $\epsilon_0$; maximum chunk cardinality $\kappa$; radius increase rate $\alpha$}
\Output{tree structure that encodes the hierarchical clustering $\cT$}
$ \cT \leftarrow  \textup{create\_node\_list}(X)$ \;
$\epsilon \leftarrow \epsilon_0$ \;
\While{$\textup{length}(\cT) > 1$}{
$\cP \leftarrow$ partition($\cT, \kappa$) \;
$\cT \leftarrow \emptyset$ \;
\For{$P \in \cP$}{
    compute neighbourhood matrix $N^{(\epsilon)}$ for P\;
    identify representive data points by solving MWIS for $P, N^{(\epsilon)},$ and $w$ \;
    compute Voronoi partition of $P$ with respect to representative points \;
    compute centroids of the cells of the Voronoi partition\;
    \For{$x \in P$}{
        ind $\leftarrow$ closest\_centroid($x$, centroids) \;
        centroids[ind].weight += $x$.weight \;
        centroids[ind].parents.append($x$) \;
    }
    $\cT$.append(centroids) \;
}
$\epsilon \leftarrow \alpha \epsilon$ \;}
\Return{$\cT$
}
}
\caption{\, BiDViT}
\label{alg:bidvit}
\end{algorithm}

It is worth noting that, at each level, instead of proceeding with the identified representative data points, one can use the cluster centroids, allowing more-accurate data coarsening and label assignment. 

\begin{figure}[t]
    \centering
    \includegraphics[clip, trim=4.8cm 15.3cm 2.2cm 4.4cm, width=0.78\textwidth]{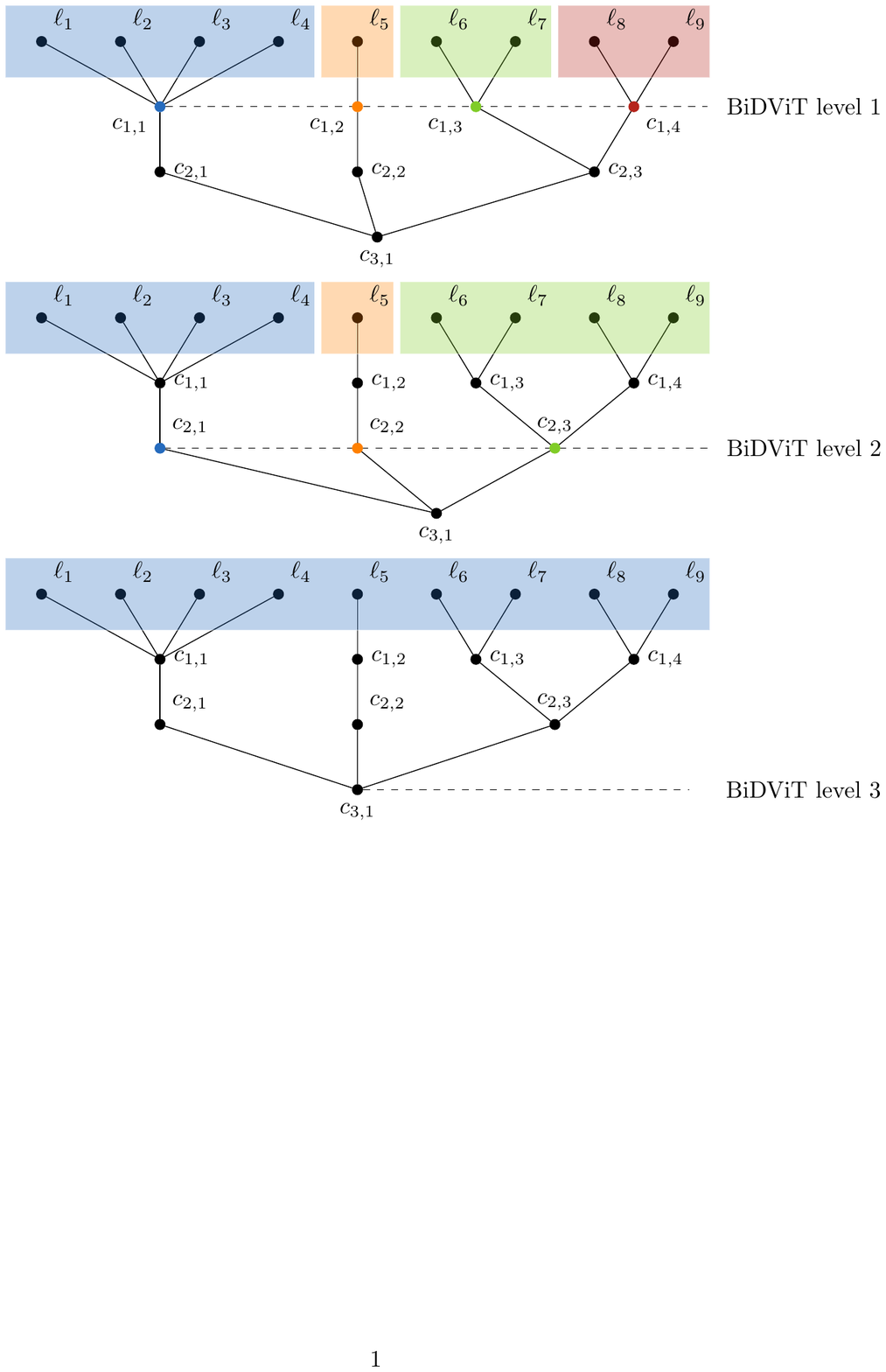}
    \caption{Dendrogram representing the output tree of BiDViT and the encoding of the clustering assignment. The original dataset is represented by the leaves, which collapse into a single centroid after three BiDViT iterations. The first iteration (``BiDViT level 1'') results in four centroids, each corresponding to a cluster consisting of the nodes that collapsed into it. At the next iteration, the algorithm merges the clusters of the centroids. For example, $c_{1,3}$ and $c_{1,4}$ are merged into $c_{2,3}$ at the next level.}
    \label{fig:tree}
\end{figure}

\subsection{Complexity Analysis}
\label{sec:complexity}
Our analysis shows that every interation of the heuristic version of  BiDViT has a computational complexity of \mbox{$\bigO(dn\log(n/\kappa)+dn\kappa)$}. Note that $\kappa \ll n$. 

The order of complexity of the partitioning procedure is $\bigO(d n \log(n/ \kappa))$. To see this, note that there are at most $\lceil \log_2(n/\kappa) \rceil$ partitioning stages and in the $s$-th stage we split $2^{s-1}$ chunks $P_i$, where $i=1,\ldots,2^{s-1}$. Let $n_i$ denote the number of data points in chunk $P_i$. Finding the dimension of maximum variance has a complexity of $\bigO(d n_i)$ and determining the median of this dimension can be achieved in $\bigO(n_i)$ via the  ``median of medians'' algorithm. Having computed the median, one can construct two chunks of equal size in linear time. Since $\sum_{1 \leq i \leq 2^{s-1}}n_i = n$, a partitioning step is \mbox{$ \bigO(d n\log(n / \kappa))$}. Any division of a chunk is independent of the other chunks at a given stage;  thus, this procedure can benefit from distributed computing.

The order of complexity for the collapsing process is $\bigO(d n \kappa)$, as computing the neighbourhood matrix of a chunk is  $\bigO(d\kappa^2)$ and  the heuristic selection procedure is $\bigO(\kappa^2)$. The number of chunks is bounded from above by $\lceil n/ \kappa \rceil$. This yields a complexity of \mbox{$\bigO((n/ \kappa) (d \kappa^2 +  \kappa^2)) = \bigO(d n \kappa)$}. As data coarsening in each chunk is independent, with $\lceil n/ \kappa \rceil$ parallel processors available the complexity reduces to $\bigO(d \kappa^2)$. 

\subsection{Relation to the MWIS Problem}
\label{sec:relation}
The process of identifying a maximal $\epsilon$-separated set of maximum weight is equivalent to solving the MWIS problem for the weighted graph \mbox{$G^\epsilon = (P,E^\epsilon, w_P)$}. Let $G=(V,E,w)$ be a weighted graph. A set of vertices $S \subseteq V$ is called \emph{independent} in $G$ if no two of its vertices are adjacent or, equivalently, if $S$ is a clique in the complement graph. This corresponds to the separation constraint mentioned earlier, where two vertices are adjacent whenever they are less than a distance of $\epsilon$ apart. The MWIS problem can be expressed as
\begin{equation}
    \tag{P\arabic{prob}}
    \addtocounter{prob}{1}
    \label{prob:mwis}
    \underset{S \subseteq V}{\text{maximize}} \,\,
    \omega(S) \quad
    \text{subject to} \quad
    S \text{ is independent},
\end{equation}
and is \mbox{NP-complete} for a general weighted graph~\cite{karp1972reducibility}, yet, for specific graphs, there exist polynomial-time algorithms~\cite{mandal2006maximum , kohler2016linear}. Note that the QUBO formulation of the MWIS problem in \cite{abbott2018hybrid, hernandez2016novel} is related to the one in \cref{prob:main_problem_QUBO_matrix}.

If all weights are positive, a \emph{maximum} weighted independent set is necessarily a \emph{maximal} independent set. A maximal independent set is a \emph{dominating set}, that is, a subset $S$ of $V$ such that every $v \in V \setminus S$ is adjacent to some $w \in S$. This corresponds to our observation that every maximal \mbox{$\epsilon$-separated} subset is \mbox{$\epsilon$-dense}.

\section{Results}
The datasets used to demonstrate the efficiency and robustness of our approach are the MNIST dataset of handwritten digits~\cite{lecun2010mnist}, a two-dimensional version of MNIST  obtained by using $t$-SNE~\cite{maaten2008visualizing}, two synthetic grid datasets, and a dataset called Covertype containing data on forests in Colorado~\cite{Dua:2017}. The synthetic grid datasets are the unions of 100 samples (in the 2D case) and 1000 samples (in the 3D case) drawn from $\cN(\mu_{ij},\sigma^2)$ with means $\mu_{ij} = (10i+5,10j+5)$ and a variance of $\sigma_2=4$ for $0 \leq i,j \leq 9$ in the 2D case and the natural extension in the 3D case. Dataset statistics are provided in~\cref{tab:tabulars}. In addition to our technical experiments, explained in the following sections, a practical application of BiDViT for image qunatization is illustrated in \cref{fig:application}. All experiments were performed using a 2.5 GHz Intel Core i7 processor and 16 GB of RAM.

\subsection{Low-Range Clustering Domain}
Although BiDViT has been specifically designed for extreme clustering, it  yields accurate assignments for low values of $k$. \cref{fig:low_range} shows the clustering assignment of BiDViT on the 2D grid dataset and on MNIST. The results are obtained by manually selecting a BiDViT level. In the grid dataset, every cluster is identified correctly. In the MNIST dataset, all clusters are recognized, except one. However, as our algorithm is based on metric balls, and some datasets might not conform to such categorization, there are datasets for which it cannot accurately assign clusters. This is true for most clustering algorithms, as they are able to recognize only specific shapes.

\begin{figure}[t]
    \centering
    \includegraphics[clip, trim=1cm 0cm 1cm 1.1cm, width=0.46\textwidth]{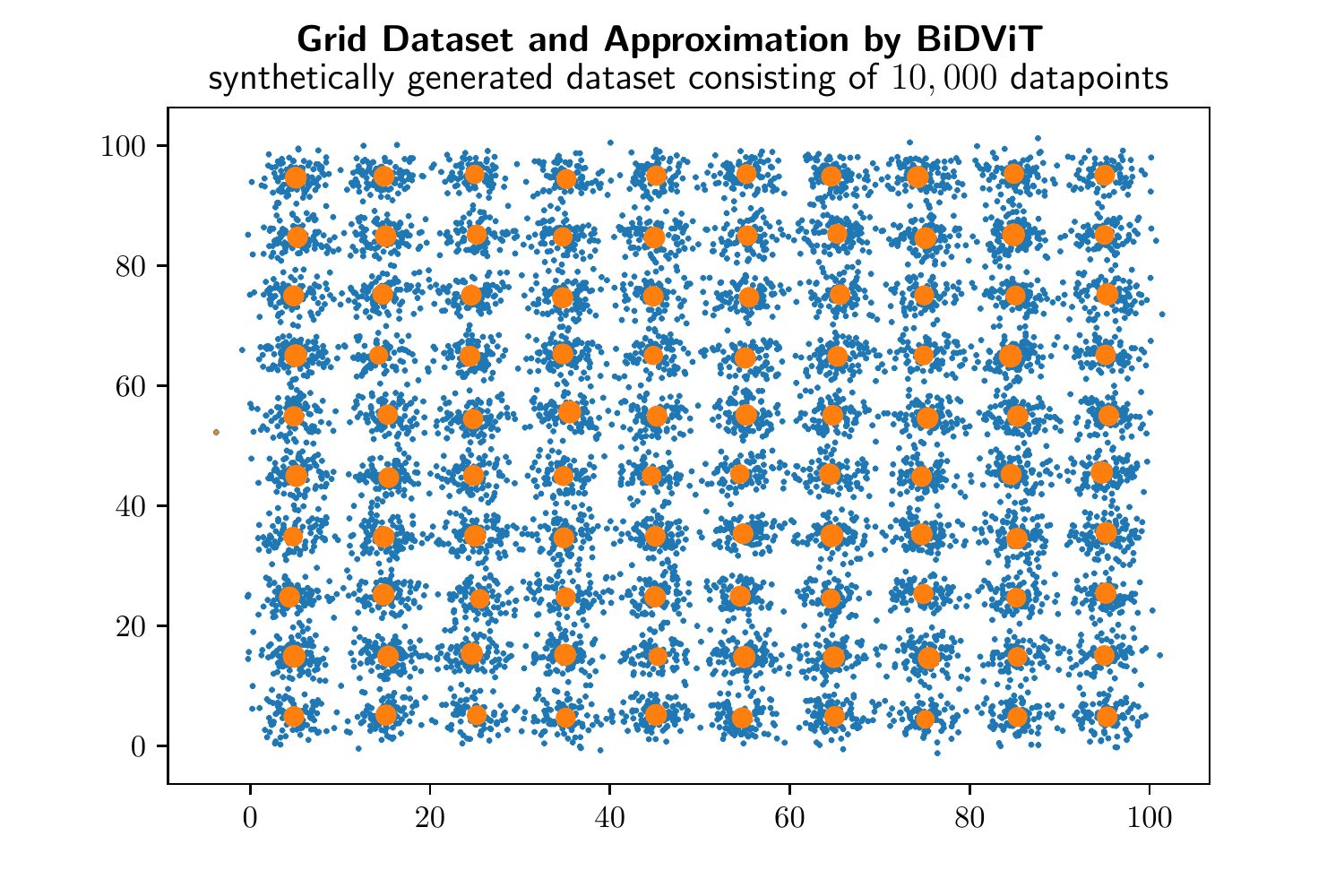}
    \includegraphics[clip, trim=1cm 0cm 1cm 1.1cm, width=0.46\textwidth]{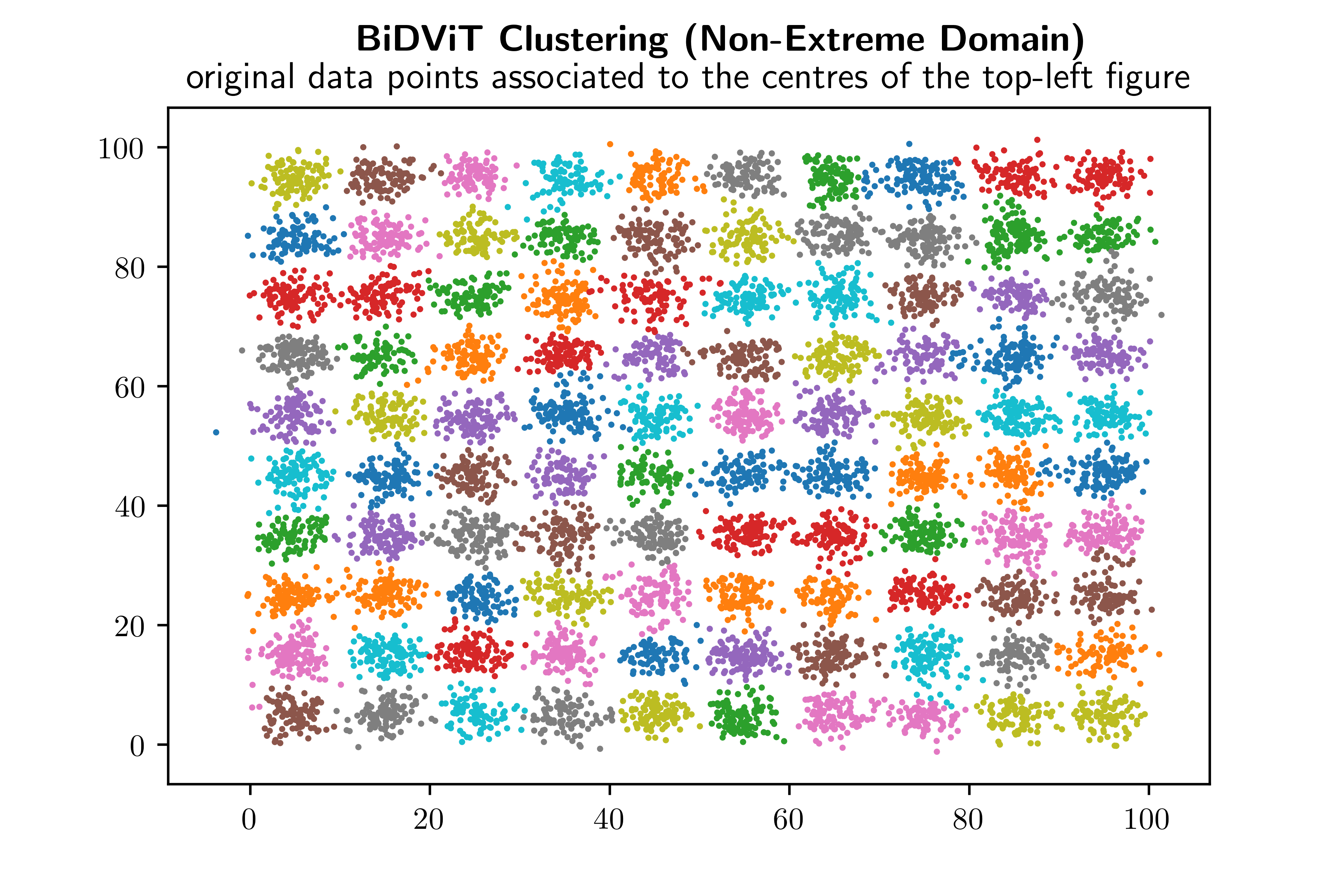}
    \includegraphics[clip, trim=1cm 0cm 1cm 1.1cm, width=0.46\textwidth]{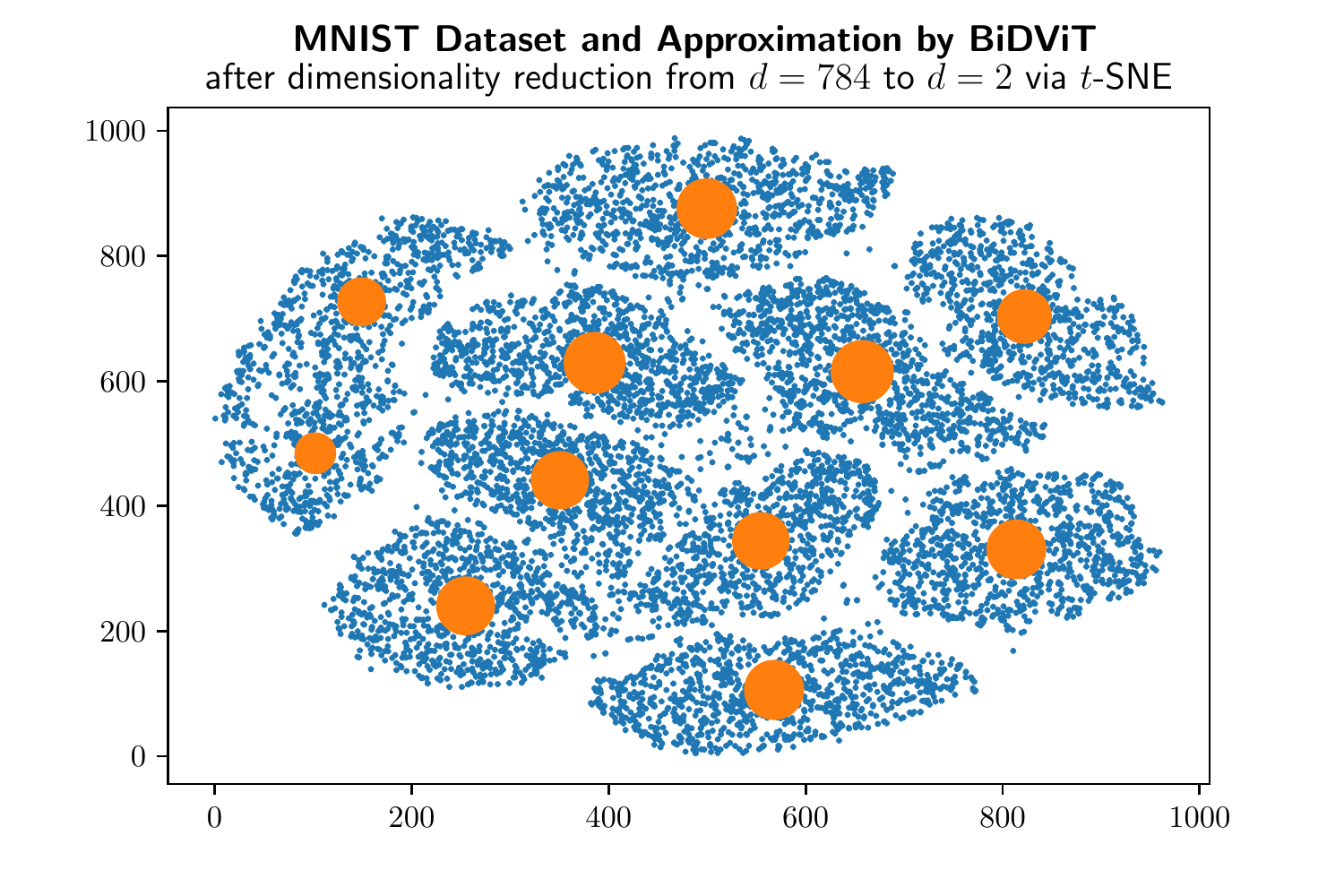}
    \includegraphics[clip, trim=1cm 0cm 1cm 1.1cm, width=0.46\textwidth]{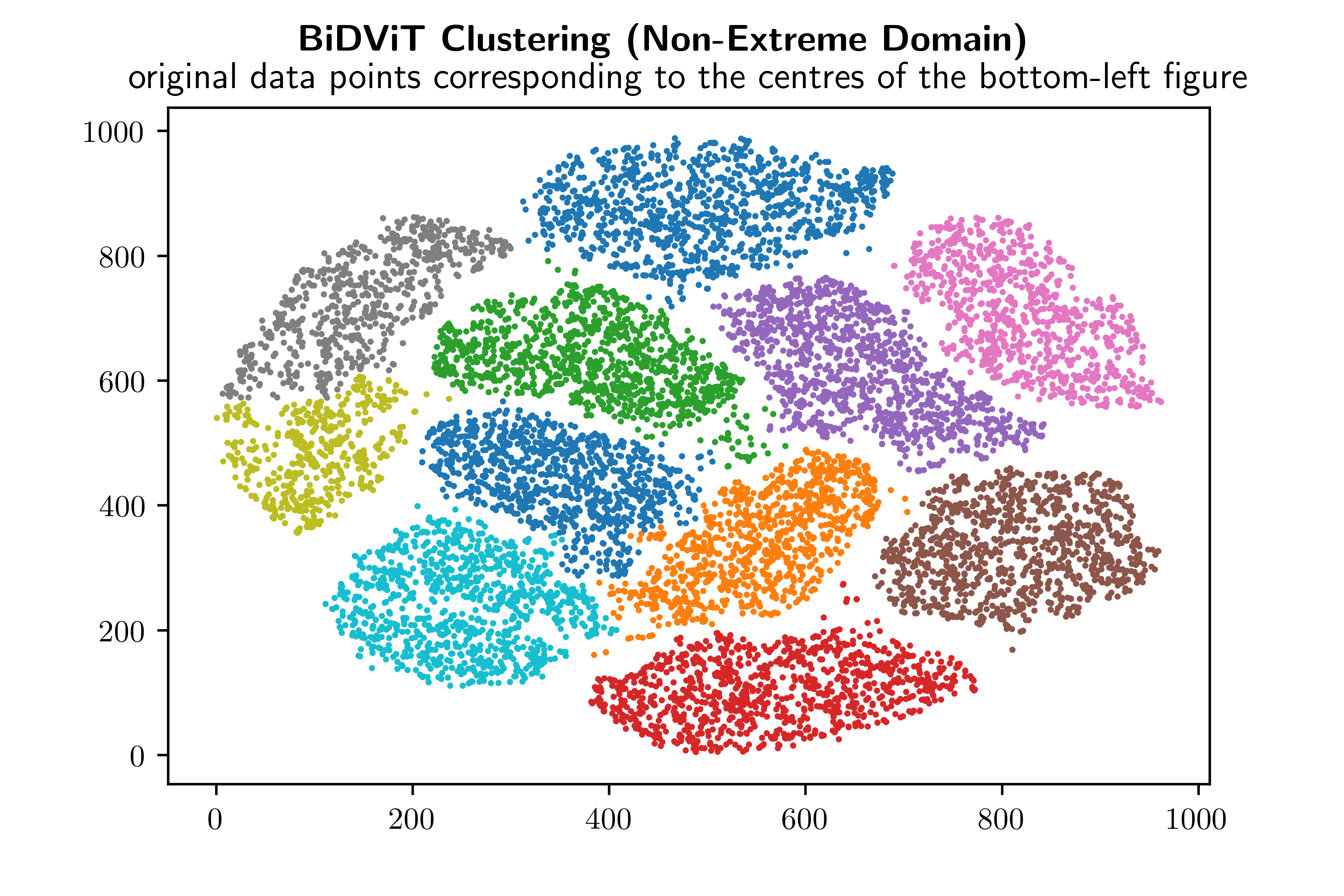}
    \caption{Performance of BiDViT in the non-extreme clustering domain. The left-hand-side figures show the original datasets (blue) with cluster centroids (orange) determined by BiDViT. On the right, colours correspond to assigned labels. The figures can be reproduced by using the parameters $\kappa=10^3, \alpha=1.3,$ and $\epsilon_0=2.0$, and using BiDViT level 18 for the MNIST dataset (bottom), and $\kappa=10^3, \alpha=1.3,$ and $\epsilon_0=1.0$, and BiDViT level 10 for the synthetic grid (top).}
    \label{fig:low_range}
\end{figure}

\subsection{Extreme Clustering Capability}
To evaluate the performance of BiDViT on high-dimensional datasets in the extreme clustering range, we used the \emph{Calinski--Harabasz score}~\cite{calinski1974dendrite} and the \emph{Davies--Bouldin score}~\cite{davies1979cluster}. These clustering metrics are  \emph{internal evaluation schemes}, that is, their values depend solely on the clustered data, not requiring the ground truth label assignment for the dataset. Such schemes must be viewed as heuristic methods: their optimal values do not guarantee optimal clusters but provide a reasonable measure of clustering quality. Detailed analyses have been conducted on the advantages and shortcomings of internal clustering measures~\cite{liu2010understanding, jain2008innovation}. In the extreme clustering scenario, where the objective is to obtain an accurate approximation of the entire dataset instead of categorizing its elements, no \emph{true} labels are given and thus external evaluation schemes (ones based on the distance to a ground truth clustering assignment) do not qualify as success measures.

Let $C_1, \ldots, C_{n_\text{c}}$ denote a total of $n_\text{c}$ detected clusters within a dataset $X$ with $n$ data points. The Calinski--Harabasz score $\mathcal{S}_{\text{CH}}$ of a clustering is defined as a weighted ratio of the \emph{inter-cluster squared deviations} to the sum of the \emph{intra-cluster squared deviations}. More precisely, $\mathcal{S}_{\text{CH}}$ is given by
\begin{equation}
    \mathcal{S}_{\text{CH}}(C_1, \ldots, C_{n_\text{c}}) = \left(  \frac{n-1}{n_\text{c}-1} \right) \frac{\sum_{k=1}^{n_\text{c}} \vert C_k \vert \Vert c_k - c \Vert^2_2}{\sum_{k=1}^{n_\text{c}} \sum_{x \in C_k} \Vert x - c_k \Vert^2_2},
\end{equation}
where $c_k$, for $k=1,\ldots,n_\text{c}$ are the cluster centroids, and $c$ is their mean. High values of $\mathcal{S}_{\text{CH}}$ are indicative of a high clustering quality. The Davies--Bouldin score  $\mathcal{S}_{\text{DB}}$ is the average maximum value of the ratios of the pairwise sums of the intra-cluster deviation to the inter-cluster deviation. The score is defined as
\begin{equation}
    \mathcal{S}_{\text{DB}}(C_1,\ldots, C_{n_\text{c}}) = \frac{1}{n_\text{c}} \sum_{k=1}^{n_\text{c}} \max_{j \neq k} \frac{S_k + S_j}{\Vert c_k - c_j \Vert_2},
\end{equation}
where $S_i = \sum_{x \in C_i} \Vert x - c_i \Vert / \vert C_i \vert.$ Low values of $\mathcal{S}_{\text{DB}}$ indicate accurate clustering.

\cref{fig:scores} shows $\mathcal{S}_{\text{CH}}$ and $\mathcal{S}_{\text{DB}}$ of clustering assignments obtained with BiDViT and Mini Batch $k$-means clustering~\cite{sculley2010web} for different values of $k$ on the Covertype dataset. Due to their high computational complexity with respect to $k$, many common clustering algorithms could not be applied. Remarkably, $\mathcal{S}_{\text{CH}}$ values are quite similar, indicating  that the cluster assignments generated by BiDViT are of comparable quality even though the runtime of our algorithm is significantly shorter. For $\mathcal{S}_{\text{DB}}$, our algorithm outperforms the others for lower values of $k$, and is comparable for large values. One explanation for the slightly weaker performance of BiDViT with respect to $\mathcal{S}_{\text{CH}}$ is that BiDViT aims to minimize the \emph{non-squared} distances, whereas $\mathcal{S}_{\text{CH}}$ rewards clustering methods that minimize \emph{squared} distances. Similarly, this explains BiDViT's advantage for $\mathcal{S}_{\text{DB}}$. 

\begin{figure}[t]
    \centering
    \includegraphics[clip, trim=1.5cm 11.5cm 3cm 2.65cm, width=\textwidth]{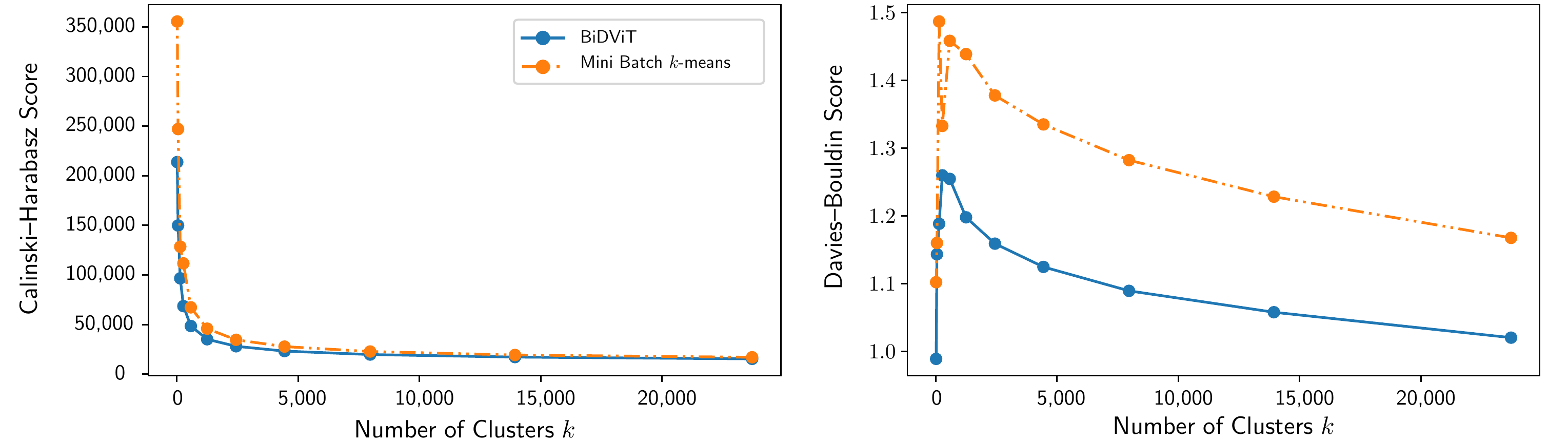}
    \caption{Calinski--Harabasz score  $\mathcal{S}_{\textup{CH}}$ (left) and Davies--Bouldin score $\mathcal{S}_{\textup{DB}}$ (right) of clustering assignments on the Covertype dataset generated by the heuristic BiDViT algorithm ($\kappa=10^3, \alpha=1.5,$ and $\epsilon_0=10^2$) and Mini Batch $k$-means clustering (\texttt{batch\_size} $= 50$, \texttt{max\_iter} $= 10^3$, \texttt{tol} $= 10^{-3},$ and \texttt{n\_init} $= 1$). Whereas a higher value of $\mathcal{S}_{\textup{CH}}$ indicates better clustering, the opposite is the case for $\mathcal{S}_{\textup{DB}}$.}
    \label{fig:scores}
\end{figure}
\begin{figure}[t]
    \centering
    \includegraphics[clip, trim=1cm 22.5cm 2.5cm 4.2cm, width=0.48\textwidth]{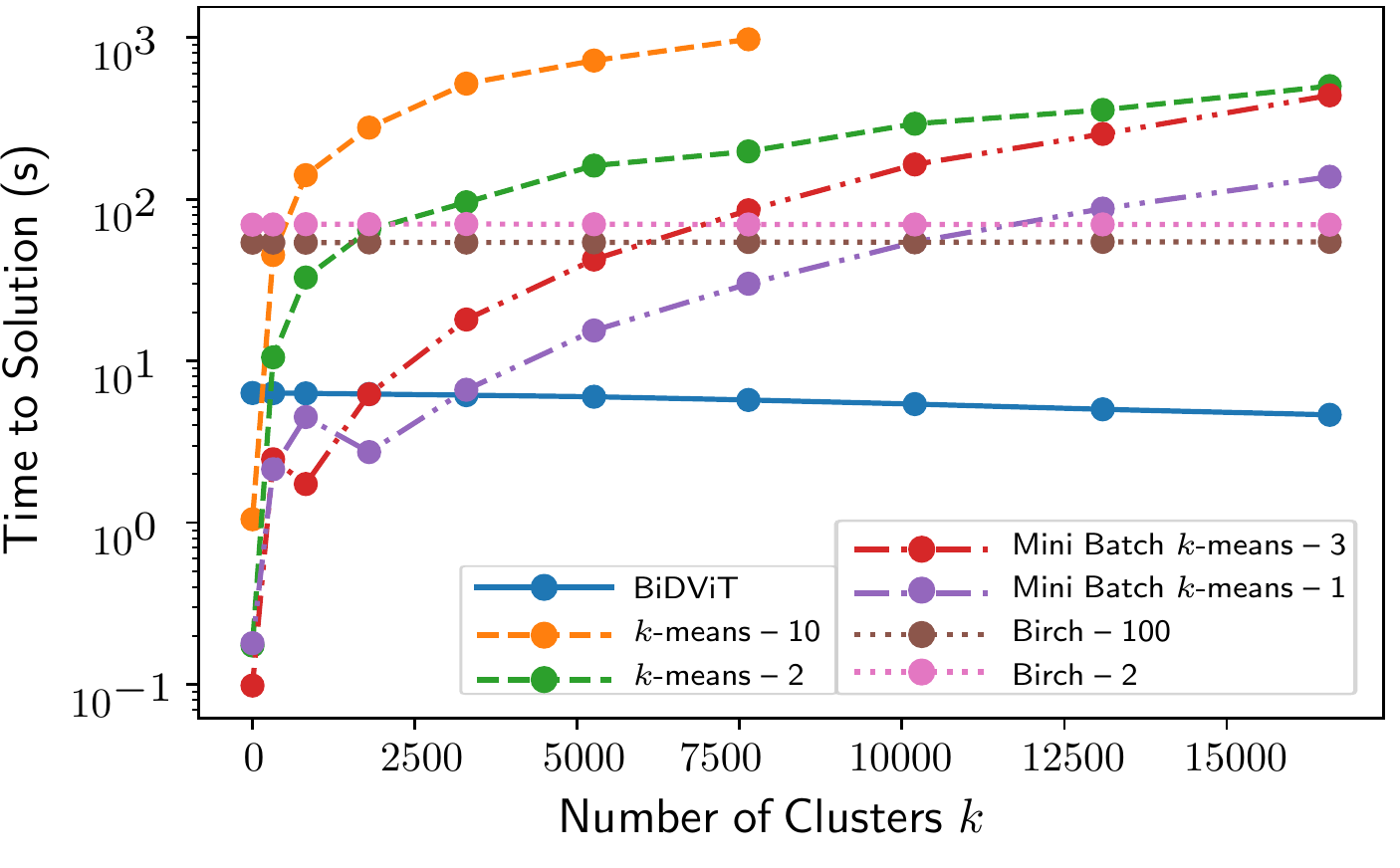}
    \includegraphics[clip, trim=0.3cm 3cm 2.6cm 23.8cm, width=0.51\textwidth]{figures/all_30K.pdf}
    \caption{Time to solution (left) and Davies--Bouldin score (right) of common clustering algorithms and BiDViT ($\kappa=10^3, \alpha=1.3,$ and $\epsilon_0=16.0)$ on a subset of the Covertype dataset for different numbers of clusters. For $k$-means++ and Mini Batch $k$-means clustering, we modified the number of initializations, and for Birch clustering, it was the branching factor. These parameters resulted in a speed-up with a minimum loss of quality; their values are indicated in the legend.}
    \label{fig:runtime_comparison}
\end{figure}

\subsection{Runtime Comparison}
In our experiments, we observed that, with respect to the total runtime, even the heuristic version of BiDViT restricted to a single core outperforms common clustering methods in the extreme clustering domain. \cref{fig:runtime_comparison} shows the runtime required by different clustering algorithms for the Covertype dataset. For the implementation of methods other than BiDViT, we used the publicly available \texttt{sklearn.clustering} module for Python. To generate the plots, we ran the entire BiDViT procedure, then applied classical algorithms for the same values of $k$. The results suggest that, in the extreme clustering domain, the runtime of BiDViT is an order of magnitude faster than that of the agglomerative methods against which it was compared, and multiple orders of magnitude faster than that of $k$-means and Mini Batch \mbox{$k$-means} clustering. The dataset cardinality was restricted to 20,000 points to obtain results for other methods, whereas BiDViT is capable of handling the entire dataset comprising 581,000 points.

\begin{table}[b!]
\centering
\caption{Dataset statistics (left) and runtime comparison of extreme clustering algorithms in seconds (right). \emph{PERCH-C} (``collapsed-mode'') was run, as it outperforms standard PERCH. The parameter $L$ sets the maximum number of leaves (see~\cite{kobren2017hierarchical} for an explanation). BiDViT selected the values \mbox{$\epsilon_0=30$} and \mbox{$\epsilon_0=0.5$}, such that a  percentage of the nodes collapsed in the initial iteration,  for the Covertype and the MNIST datasets, respectively. The mean and standard deviation were computed over five runs.}
\resizebox{\textwidth}{!}{
\renewcommand{\arraystretch}{2}
\begin{tabular}{cccc}
\toprule
\bf Name & \bf Description & \bf Cardinality  & \bf Dimension \\
\midrule
MNIST & handwritten images & $60$ K & $784$ \\
MNIST-2D & $t$-SNE of the above & $60$ K & $2$ \\
Covertype & forest data & $581$ K & $54$ \\
grid-2D & synthetically generated & $100$ K & $2$ \\
grid-3D & synthetically generated & $1000$ K & $3$ \\
\bottomrule
\end{tabular}
\quad
\renewcommand{\arraystretch}{1}
\begin{tabular}{ccc}
\toprule
\bf Algorithm & \multicolumn{2}{c}{\bf Runtime on Dataset (seconds)} \\
specified parameters &  Covertype & grid-3D  \\
\midrule
\multirow{1}{*}{PERCH-C}& \multirow{2}{*}{$1616.45 \pm 20.37$} & \multirow{2}{*}{$1588.10 \pm 41.46$}  \\ 
\scriptsize{$L=\textup{Inf}$} & & \\
\multirow{1}{*}{PERCH-C}& \multirow{2}{*}{$1232.53 \pm 53.61$} & \multirow{2}{*}{$1280.30 \pm 15.03$}  \\ 
\scriptsize{$L=50,000$} & & \\
\multirow{1}{*}{PERCH-C}& \multirow{2}{*}{$928.82 \pm 47.00$} & \multirow{2}{*}{--}  \\ 
\scriptsize{$L=10,000$} & & \\
\multirow{1}{*}{BiDViT (heuristic)} &  \multirow{2}{*}{$301.36 \pm 10.01$} & \multirow{2}{*}{$152.50 \pm 0.86$ }  \\
\scriptsize{$\kappa=2000, \alpha=1.1$}& &  \\
\multirow{1}{*}{BiDViT (heuristic)} &  \multirow{2}{*}{$56.26 \pm 0.62$} & \multirow{2}{*}{$75.22 \pm 0.95$}  \\
\scriptsize{$\kappa=500, \alpha=1.2$}& & \\
\bottomrule
\end{tabular}}
\label{tab:tabulars}
\end{table}

We then compared the runtime of BiDViT to \emph{PERCH} (``Purity Enhancing Rotations for Cluster Hierarchies''), a hierarchical algorithm for extreme clustering~\cite{kobren2017hierarchical}, to our knowledge the only other algorithm designed to solve extreme clustering problems. We restricted both algorithms to using a single core. \cref{tab:tabulars} shows that BiDViT performs an order of magnitude faster than PERCH. However, they solve somewhat different problems: whereas BiDViT aims to gradually coarsen a dataset by finding $\epsilon$-separated, \mbox{$\epsilon$-dense} subsets, PERCH maximizes the \emph{dendrogram purity}, a measure of the clustering tree's consistency~\cite{kobren2017hierarchical}. The clustering tree generated by PERCH is binary and thus enormous, allowing for much finer incremental distinctions between clustering assignments. In contrast, the tree generated by BiDViT is more compact, as multiple data points can collapse into the same representative point. When comparing dendrogram purities, we expect PERCH to outperform BiDViT; when comparing Davies--Bouldin scores at a given level, we expect the opposite. We did not test these hypotheses, as dendrogram purity is an external evaluation scheme, that is, it requires a clustering assignment to use for comparison, which is not available in unsupervised machine learning.

\subsection{Results for the Quantum Version of BiDViT}
We tested a  prototype of BiDViT on a \mbox{D-Wave 2000Q} quantum annealer, a machine that has 2048 qubits and 5600 couplers. According to \mbox{D-Wave Systems}, the computer uses 128,000 Josephson junctions and was the most complex superconducting integrated circuit built to date when introduced in January of 2017~\cite{DWave}.

Before solving the QUBO problems, we applied preprocessing techniques, reducing their size and difficulty \cite{glover2018logical}. This proved effective and eliminated a great many variables. In most cases, we observed a size reduction of over 60\%.  

For the quantum version of BiDViT, we observed higher-quality solutions and a significant speed-up for BiDViT, when compared to common clustering methods. Both observations are based on results shown in \cref{fig:D-Wave_results}.

\begin{figure}[t]
    \centering
    \includegraphics[clip, trim=0cm 0cm 0cm 0cm, width=.5\textwidth]{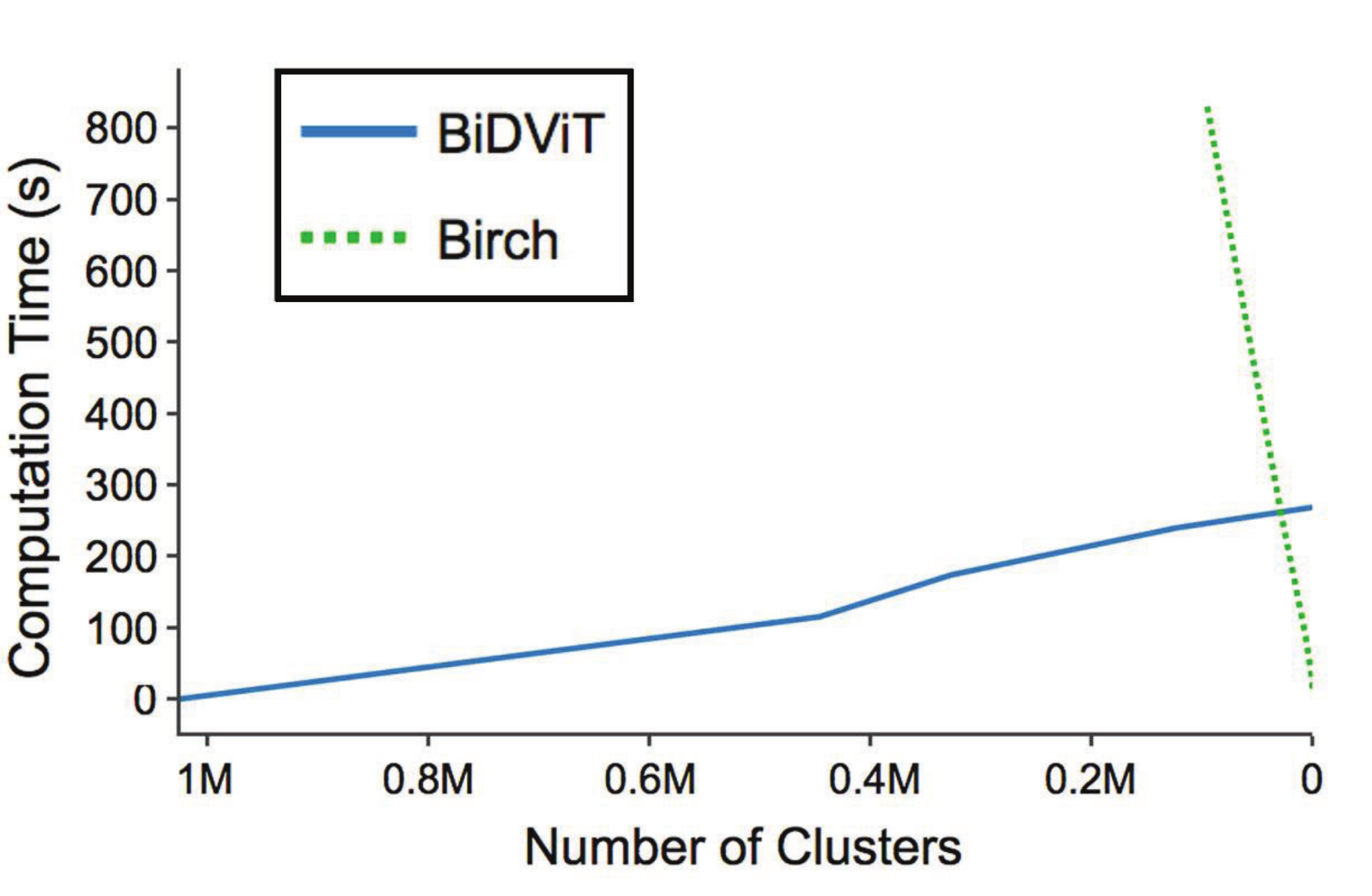}
    \includegraphics[width=0.48\textwidth]{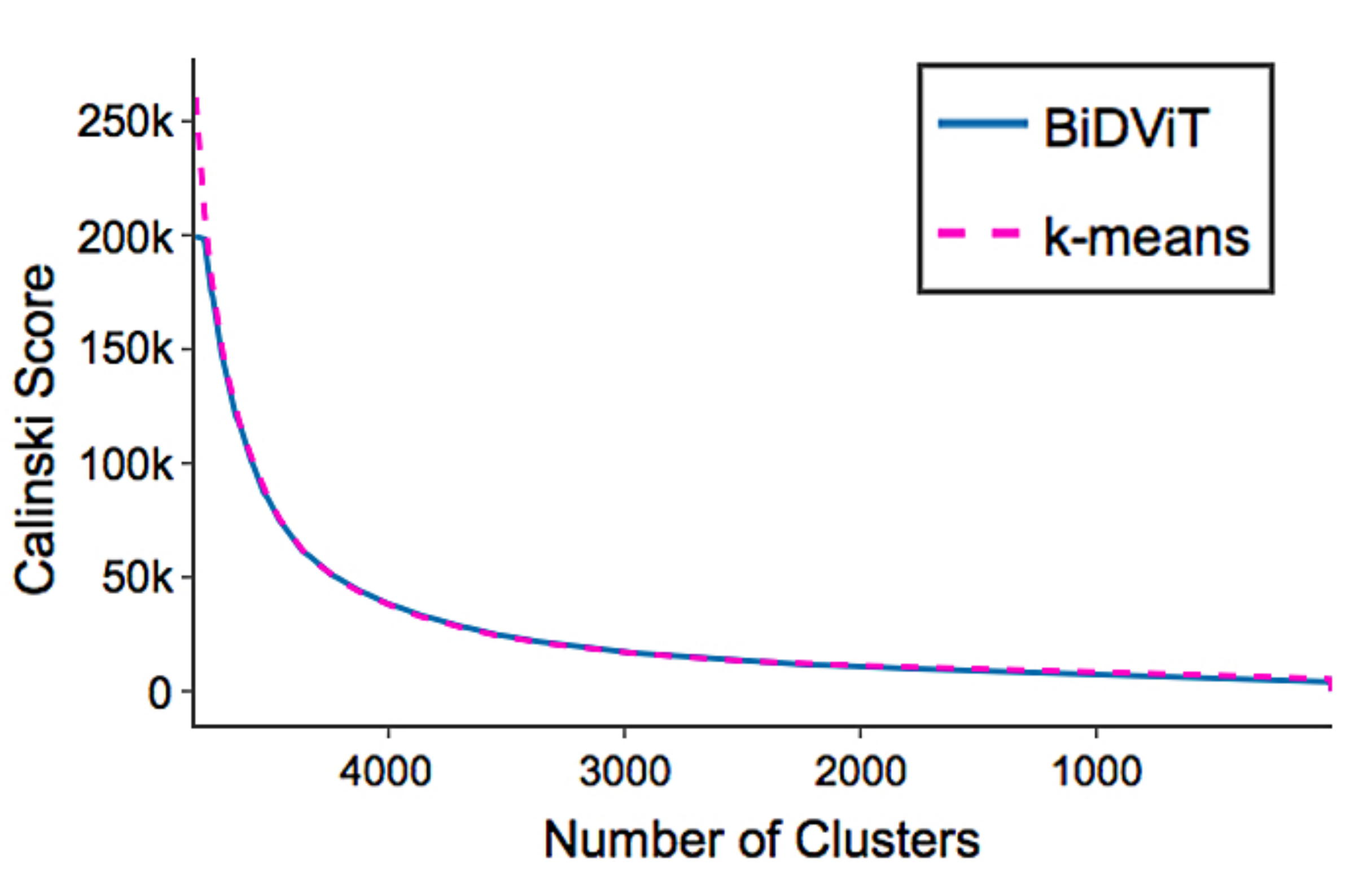}
    \caption{Runtime and quality of  results for the quantum version of BiDViT obtained using a D-Wave 2000Q quantum annealer. Left) Computational time for the 3D grid dataset. Right) Comparison of the Calinski--Harabasz score of the quantum version of BiDViT and of $k$-means clustering on a subset of the MNIST dataset for different numbers of clusters. We chose to invert the orientation of the abscissae 
     to illustrate that at low BiDViT levels there are many clusters and at high levels only a few remain.}
    \label{fig:D-Wave_results}
\end{figure}

However, the heuristic version of BiDViT and the common clustering algorithms were executed on a classical device that has a limited computational capacity, whereas the \mbox{D-Wave 2000Q} is a highly specialized device. Running these algorithms on a high-performance computer might lead to an equivalent degree of speed-up.

\section{Conclusion}
We have developed an efficient algorithm capable of performing extreme clustering. Our complexity analysis and numerical experiments show that if the dataset cardinality and the desired number of clusters are both large, the runtime of \mbox{BiDViT} is at least an order of magnitude faster than that of classical algorithms, while yielding a solution of comparable quality. With advances in quantum annealing hardware, one can expect further speed-ups in our algorithm and size of dataset that can be processed. 

Independent of BiDViT, our coarsening method, based on identifying an $\epsilon$-dense, $\epsilon$-separated subset, is valuable in its own right---it is a novel approach to clustering  which is not limited solely to the extreme clustering domain.

Further investigation of our coarsening approach is justified, as we have identified a domain for the radius of interest (in \cref{thm:separability}) such that, under a separability assumption, every solution to \cref{prob:original_problem} (i.e., every maximum weighted $\epsilon$-separated subset) yields the optimal clustering assignment.

\begin{figure}[h!]
    \centering
    \includegraphics[clip, trim=2cm 2cm 2cm 2cm, width=0.64\textwidth]{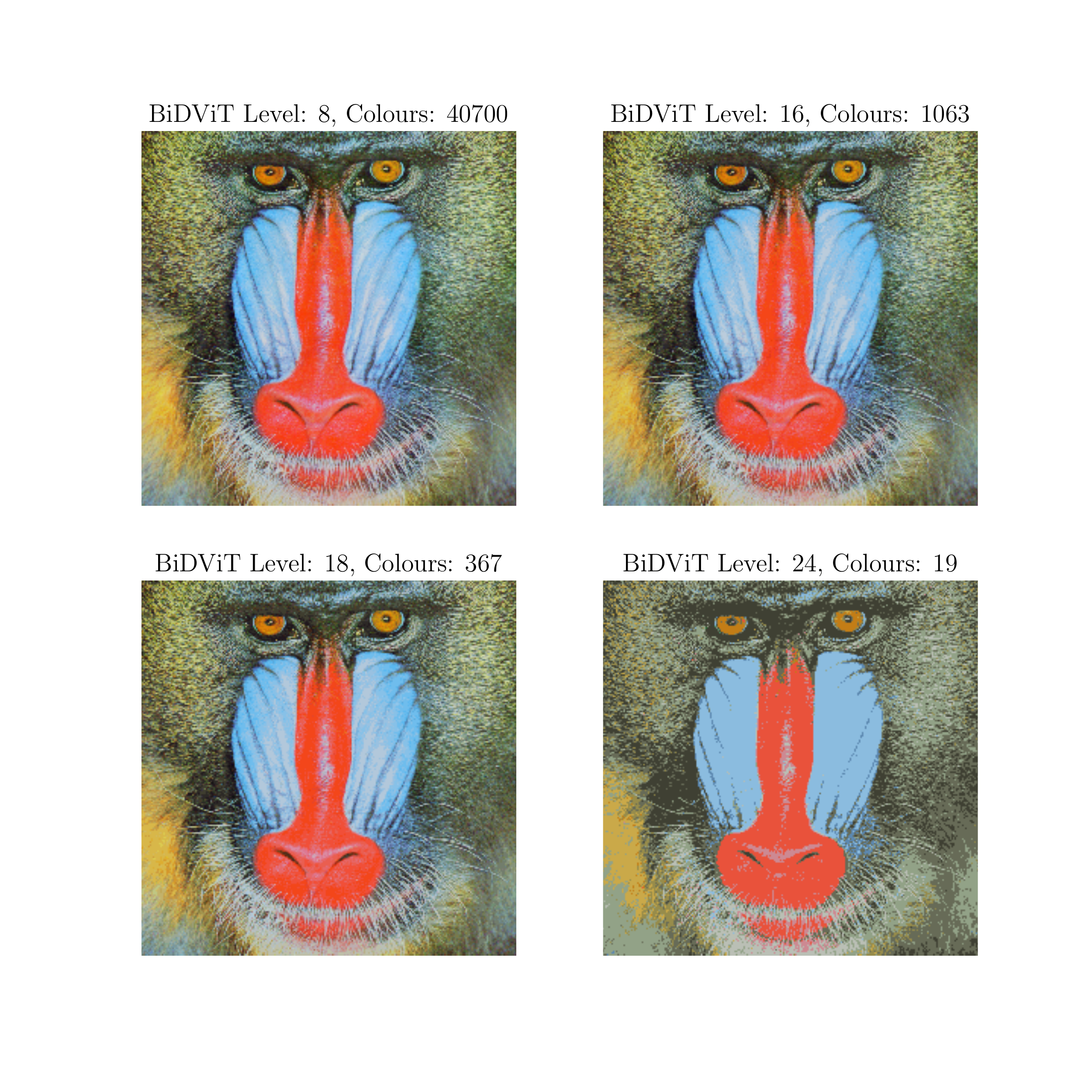}
    \includegraphics[clip, trim=1cm 2cm 0cm 2cm, width=0.35\textwidth]{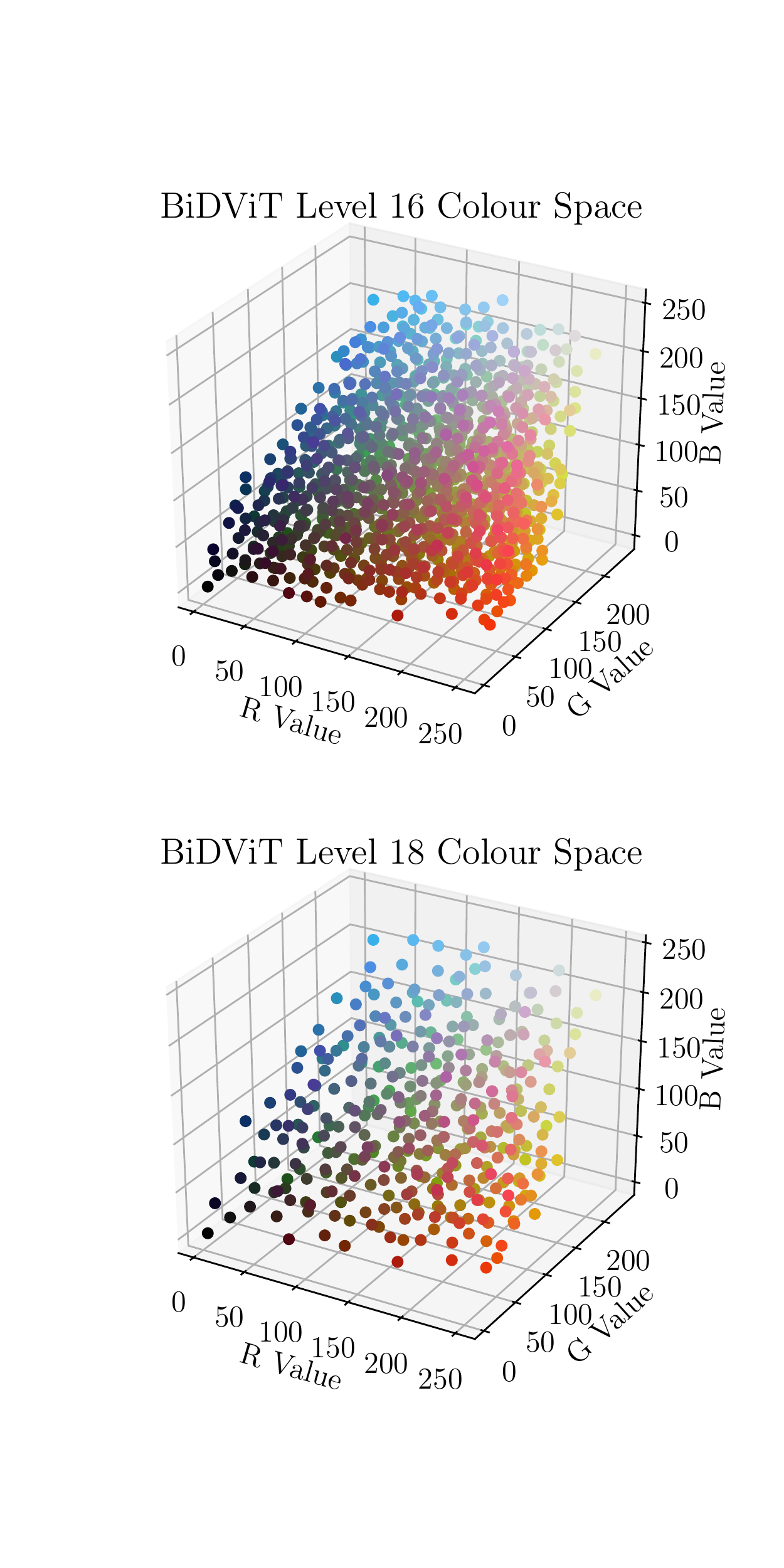}
    \caption{Image quantization via clustering in the colour space of a standard test image. The original image has 230,427 colours. BiDViT is particularly fast at reducing its colours to a number on the order of $10^4$, as this falls into the extreme clustering range. Here, the $k$-means clustering algorithm faces its computational bottleneck. A commonly employed algorithm for such problems is the median cut algorithm. Naturally, it is faster than BiDViT---as BiDViT employs the median cut algorithm in its chunking procedure---but BiDViT produces a more accurate colour assignment.}
    \label{fig:application}
\end{figure}

\section*{Acknowledgements}
We thank Saeid Allahdadian, Nick Cond\'e, Daniel Crawford, and Austin Wallace for contributing to an earlier version of the algorithm. We thank Maliheh Aramon, Pooja Pandey, and Brad Woods for helpful discussions on optimization theory. The implementation of the QUBO preprocessing techniques was performed jointly with Brad Woods and Nick Cond\'e. Inderpreet Singh contributed to the figure on image quantization. Victoria Wong assisted with graphical editing of two figures. Partial funding for this work was provided by the Mitacs Accelarate internship initiative.

\bibliographystyle{bibtex/splncs03}
\bibliography{library}

\newpage
\appendix

\section*{Supplementary Information to the Paper, ``A Quantum Annealing-Based Approach to Extreme Clustering''}

\subsection*{Appendix A: An Alternative Coarsening Method}
\label{sec:alt_approx}

In certain situations, a user might not want the approximating set to be $\epsilon$-separated but instead might be interested in finding an $\epsilon$-dense subset with a minimum number of elements, or, more generally, the minimum cost for some \emph{cost function} $c: X \rightarrow \mathbb{N}$.  Finding such a set can be realized in a very similar way to the quantum method of BiDViT. In fact, the only modifications needed would be \mbox{to \cref{sec:chunk_approx}} in the paper, where we introduce the concept of chunk coarsening. 

Let $P = \lbrace x^{(1)}, \ldots, x^{(n)} \rbrace$, and let $N^{(\epsilon)}$ and $s_i$, for $i=1,\ldots,n$, be defined as \mbox{in \cref{sec:chunk_approx}}. Analogously to the weight vector $w$, we define a cost vector $c$ by $c_i= c(x^{(i)})$ for each $x^{(i)} \in P$. The problem of finding an $\epsilon$-dense subset $S \subseteq P$ of minimum cost can then be expressed as follows:
\begin{equation}
    \tag{P\arabic{prob}}
    \addtocounter{prob}{1}
    \label{eq:MSC}
    \underset{s \in \lbrace 0,1 \rbrace^n}{\text{minimize}}\,\,
    \sum_{i=1}^n s_i c_i \quad
    \text{subject to} \quad
    \sum_{j=1}^n N_{ij}^{(\epsilon)} s_j \geq 1, \quad i=1,\ldots n. 
\end{equation}
The constraints in \cref{eq:MSC} enforce the condition that for each solution (corresponding to a subset), every point in $P$ is represented by at least one of the points from the selected subset. The subset will not necessarily be $\epsilon$-separated, but it will be \mbox{$\epsilon$-dense}.

In the same way that finding an $\epsilon$-separated subset of maximum weight corresponds to the MWIS problem, finding an $\epsilon$-dense subset of minimum cost corresponds to the minimum weighted dominating set (MWDS) problem, which is equivalent to a weighted version of the minimal set covering (MSC) problem. Consider a set $U$ and subsets \mbox{$S_j \subseteq U$ and $j \in J$}, where $J$ is some set of indices, such that $U = \bigcup_{j \in J} S_j$. The MSC problem then consists of finding a subset $J_0 \subseteq J$ such that the property $U \subseteq \bigcup_{j \in J_0} S_j$ is satisfied, and $J_0$ is of minimum cardinality with respect to this property. For example, if \mbox{$U = \{a,b,c,d,e\}$}, \mbox{$S_1 = \{a,c\}$}, $S_2 = \{a,d\}$, and $S_3 = \{b,d,e\}$, then the solution to the MSC problem is given by $J_0= \{1,3\}$, as none of the subsets cover $U$, but the union $S_1 \cup S_3$ does. The general MSC problem is known to be NP-hard \cite{karp1972reducibility}. By defining $S_j = B(x^{(j)},\epsilon) \cap P$ for $j=1,\ldots, n$ in the above setting, one can see that we have solved a weighted version of the MSC problem.  

To transform \cref{eq:MSC} into a QUBO problem, we convert the inequality constraints to equality constraints by adding integer slack variables. Note that the $i$-th constraint is satisfied if and only if there exists some $\xi_i \in \mathbb{N}_0$ such that
$ \sum_{j=1}^n N_{ij}^{(\epsilon)} s_j - 1 = \xi_i$.
In fact, given that $s \in \lbrace 0, 1 \rbrace^n$, we can see that the $\xi_i$ must satisfy the bounds
\begin{equation}
    \label{eq:bounds}
    0 \leq\xi_i \leq \left(\sum_{j=1}^n N_{ij}^{(\epsilon)}\right) -1, \quad \textup{ for } i=1,\ldots,n.
\end{equation}
Thus, by dualizing the equality constraints, \cref{eq:MSC} can be expressed as a QUBO problem
\begin{equation}
    \tag{P\arabic{prob}}
    \addtocounter{prob}{1}
    \label{eq:INT}
    \underset{\substack{s \in \lbrace 0,1 \rbrace^n\\ \mathbf{0}\leq \xi \leq ( N^{(\epsilon)}\mathbf{1}) - \mathbf{1}}}{\text{minimize}}\,\,
    \sum_{i=1}^n s_i c_i + \lambda \sum_{i=1}^n \left[ \left( \sum_{j=1}^n N_{ij}^{(\epsilon)} s_j \right) - 1 - \xi_i \right]^2.
\end{equation}

We will now describe how substituting a binary encoding for each of the $\xi_i$, for $i=1, \ldots, n$, in \cref{eq:INT} yields the desired QUBO formulation. For each $i=1,\ldots, n$, the $(N^{(\epsilon)} \mathbf{1})_i$ possible states of $\xi_i$ can be encoded by $\lfloor m_i \rfloor +1$ binary variables $b^{(i)}_0, \ldots, b^{(i)}_{\lfloor m_i \rfloor}$, where \mbox{$m_i = \log_2(N^{(\epsilon)} \mathbf{1})_i $}. The encoding has the form

\begin{equation}
    \xi_i = \sum_{k=0}^{\lfloor m_i \rfloor} b^{(i)}_k \gamma_k^{(i)}, \quad \text{for } i=1,\ldots,n,
\end{equation}
where $\gamma_k^{(i)} \in \mathbb{N}$ are fixed coefficients that depend solely on the bounds of \cref{eq:bounds}. If we were to select $\gamma_k^{(i)} = 2^k$ for \mbox{$k=0, \ldots, \lfloor m_i \rfloor$}, then, if $m_i \notin \mathbb{N}$, $\xi_i$ could assume states that do not satisfy these bounds. We can avoid this situation by manipulating the coefficient $\gamma^{(i)}_{\lfloor m_i \rfloor}$ of the final bit $b_{\lfloor m_i \rfloor}^{(i)}$ such that $\sum_{k=0}^{\lfloor m_i \rfloor - 1}  2^k + \gamma^{(i)}_{\lfloor m_i \rfloor}  = (N^{(\epsilon)} \mathbf{1})_i -1$. This may lead to a situation where there are multiple valid encodings for the same integer, but it will always hold that
\begin{equation}
    0 \leq \sum_{k=0}^{\lfloor m_i \rfloor}  b^{(i)}_k \gamma^{(i)}_{k} \leq \left(\sum_{j=1}^n N_{ij}^{(\epsilon)}\right) -1,
\end{equation}
where $\gamma^{(i)}_k = 2^k$ for $k< \lfloor m_i \rfloor$. Substituting the binary encoding into \cref{eq:INT} yields the following QUBO formulation:
\begin{equation}
    \tag{P\arabic{prob}}
    \addtocounter{prob}{1}
    \label{eq:QUBO_MSC}
    \underset{\substack{b^{(i)} \in \lbrace 0,1 \rbrace^{\lfloor m_i \rfloor + 1}\\ s \in \lbrace 0,1 \rbrace^n}}{\text{minimize}}\,\,
    \hspace{-0.4em} \sum_{i=1}^n s_i c_i + \lambda \left[ \left(\sum_{j=1}^n N_{ij}^{(\epsilon)} s_j \right) -1 - \sum_{k=0}^{\lfloor m_i \rfloor}  b^{(i)}_k \gamma^{(i)}_{k} \right]^2. 
\end{equation}

One can show that the solution set of this QUBO problem is equivalent to the one for \cref{eq:MSC} for $\lambda > n \Vert c \Vert_\infty$. We have not investigated whether this bound is sharp. Note that our QUBO formulation is similar to the one described in~\cite{lucas2014ising}, but uses a different encoding.

The number of binary variables in the QUBO formulation of this problem depends on the binary encoding of $\xi$. If the vertex degree in $G^\epsilon$ is uniformly bounded from above by a constant $\eta>0$, then each $\xi_i$ can be encoded with fewer than $\lfloor \log_2(\eta) \rfloor$ binary variables. Therefore, the number of variables in the QUBO polynomial will be at most \mbox{$n(1+\lfloor \log_2(\eta) \rfloor)$}. In the worst case, that is, when there is vertex that is a neighbour of every other vertex, the polynomial would still comprise fewer than $n(1 + \lfloor \log_2(n) \rfloor)$ variables.
\end{document}